\newcommand{\TO}{\textbf{to}}
\theoremstyle{plain}
\theoremstyle{definition}
\theoremstyle{remark}
\newtheorem{thm}{Theorem}
\newtheorem{lem}[thm]{Lemma} 
\icmltitlerunning{Preprint}
\begin{document}

\twocolumn[
  \icmltitle{SpikySpace: A Spiking State Space Model for Energy-Efficient Time Series Forecasting}



  \icmlsetsymbol{equal}{*}

  \begin{icmlauthorlist}
    \icmlauthor{Kaiwen Tang}{nus}
    \icmlauthor{Jiaqi Zheng}{sea}
    \icmlauthor{Yuze Jin}{nus}
    \icmlauthor{Yupeng Qiu}{nus}
    \icmlauthor{Guangda Sun}{nus}
    \icmlauthor{Zhanglu Yan}{nus}
    \icmlauthor{Weng-Fai Wong}{nus}
  \end{icmlauthorlist}

  \icmlaffiliation{nus}{School of Computing, National University of Singapore, Singapore, Singapore}
  \icmlaffiliation{sea}{Sea AI Lab, Singapore, Singapore}

  \icmlcorrespondingauthor{Zhanglu Yan}{zlyan@nus.edu.sg}

  \icmlkeywords{Machine Learning, ICML}

  \vskip 0.3in
]



\printAffiliationsAndNotice{}  

\begin{abstract}
 Time-series forecasting in domains like traffic management and industrial monitoring often requires real-time, energy-efficient processing on edge devices with limited resources. 
 Spiking neural networks (SNNs) offer event-driven computation and ultra-low power and have been proposed for use in this space.
 Unfortunately, existing SNN-based time-series forecasters often use complex transformer blocks. 
To address this issue, we propose SpikySpace, a spiking state-space model (SSM) that reduces the quadratic cost in the attention block to linear time via spiking selective scanning. 
Further, we introduce PTsoftplus and PTSiLU, two efficient approximations of SiLU and Softplus that replace costly exponential and division operations with simple bit-shifts. 
Evaluated on four multivariate time-series benchmarks, SpikySpace outperforms the leading SNN in terms of accuracy by up to 3.0\% while reducing energy consumption by over 96.1\%.
As the first fully spiking state-space model, SpikySpace bridges neuromorphic efficiency with modern sequence modeling, opening a practical path toward efficient time series forecasting systems. Our code is available at \href{https://anonymous.4open.science/r/SpikySpace}{https://anonymous.4open.science/r/SpikySpace}
\end{abstract}

\section{Introduction}
Analytics systems deployed at the edge, such as urban traffic management, industrial monitoring, and on-device sensing, demand forecasting methods that function locally~\cite{lv2022edge, reis2025edge, kashpruk2023time}.
Driven by strict limits on power, privacy, and bandwidth, these applications require predictions in real time without reliance on the cloud.
While accurate, deep learning models rely on dense matrix multiplications that consume substantial energy, making them difficult to deploy on devices with limited power reserves. 
Spiking neural networks (SNNs) offer a promising alternative~\cite{han2022review}. By replacing complex multiplications with discrete additions driven by events, SNNs enable computation to scale sparsely with signal changes~\cite{yamazaki2022spiking}. This mechanism is naturally suited for processing streaming data, where efficiency and responsiveness are essential.


Despite these advantages, recent advancements in SNNs have yet to translate theoretical efficiency into practice. Works like SpikeTCN~\cite{lv2024efficient}, iSpikformer~\cite{lv2024efficient}, and TS-LIF~\cite{shibo2025ts} have achieved desirable performance, while they still rely on high-cost encoders or the transformer architecture. Even with spikes, the underlying mechanism of self-attention exhibits quadratic complexity with respect to the length of the sequence, which causes the costs of memory access and communication to explode over time.
This fundamentally undermines the energy benefits of event-driven computation, highlighting the need for a more efficient architectural alternative.

To bridge this gap, we propose SpikySpace, a novel spiking architecture that leverages State Space Models (SSMs)~\cite{gu2021efficiently, gu2024mamba} to achieve high accuracy under strict energy constraints. SpikySpace adopts an architecture based on event-driven state spaces that combines selective updates driven by spikes with continuous time dynamics.
Unlike recent SNNs constrained by the quadratic complexity of attention mechanisms or the sequential bottlenecks of RNNs, SpikySpace employs compact latent states updated with $O(L)$ complexity. 
This design eliminates the heavy $O(L^2)$ computation and memory overhead of attention mechanisms while capturing long-range dependencies more effectively than traditional recurrence.
Consequently, SpikySpace achieves competitive accuracy while significantly reducing energy consumption, positioning it as an ideal solution for deployment on resource-constrained neuromorphic and edge platforms.

However, implementing SSMs on neuromorphic hardware remains challenging due to the continuous nature of the state selection process and the complexity of activation functions.
Existing efforts to integrate SNNs and SSMs~\cite{shen2025spikingssms, bal2024p, zhong2024spike, huang2025spikingmamba} often compromise efficiency by retaining dense floating point multiplications in the recurrent scan or relying on expensive operations like exponentials.
SpikySpace overcomes these limitations through a fully discrete hardware-friendly design. 
{First}, we introduce a selective scan mechanism driven by spikes.
Unlike standard implementations that update states via continuous matrix multiplications, we constrain the parameters for time scales to powers of two and encode all intermediate activations and recurrent states into sparse spikes. This transforms the core state transitions into efficient bit-shift operations and integer accumulations, free of multiplication.
{Next}, we propose approximations we called PTsoftplus and PTSiLU to handle the complex nonlinearities. They utilize linear transformations and power-of-two components to eliminate costly exponentials and divisions.
We will give theoretical proofs of their error bounds, guaranteeing that the surrogate gradients remain stable during backpropagation. To enable effective learning, we employ a quantization-aware training strategy in the continuous domain, followed by a deterministic conversion to spike events. To the best of our knowledge, SpikySpace is the first spiking state-space model to realize fully multiplication-free recurrent dynamics.

Our evaluations on four multivariate forecasting benchmarks demonstrate that SpikySpace maintains high predictive performance while drastically lowering resource usage.
Specifically, SpikySpace surpasses the accuracy of the previous best-performing SNN SpikeSTAG~\cite{hu2025spikestag} by 3.0\% on dataset Metr-la~\cite{li2018diffusion}, and even outperform dense iTransformer~\cite{liuitransformer} on dataset Electricity~\cite{lai2018modeling}.
SpikySpace also reduces energy consumption by factors of $26.6\times$ compared to iSpikformer and $78.9\times$ compared to iTransformer, without compromising stability.

Our main contributions are summarized as follows:
\begin{itemize}
\item We propose SpikySpace, a novel architecture that integrates the continuous dynamics of SSMs into an event-driven framework for time series tasks. 
By implementing a spike-driven selective scan constrained to powers of two, we achieve a recurrent backbone with linear complexity that is completely free of multiplication.

\item We design and theoretically validate two hardware-friendly approximations, PTsoftplus and PTSiLU. Constructed from linear transformations and power-of-two components, these operators are proven to approximate the original functions closely while eliminating costly exponentials and divisions, removing a major obstacle to deploying SSM-based SNNs on energy-constrained systems.

\item We validated SpikySpace on multivariate forecasting benchmarks, and show that it achieves competitive accuracy with substantial savings in energy and parameter size. We believe this is evidence that SpikySpace is a highly practical solution for edge deployment.
\end{itemize}

\section{Related Work}
\paragraph{Spiking Neural Networks for Sequence Modeling.}
Spiking Neural Networks (SNNs) offer significant energy efficiency through event-driven computation~\cite{taherkhani2020review, zhou2024direct}. 
While successful in tasks ranging from image classification~\cite {hu2023fast, tang2024onespike} to NLP~\cite{zhou2023spikingformer, tang2024sorbet} with SNNs' intrinsic temporal dynamics, modeling long sequences remains a challenge. Current Spiking Transformers capture global dependencies but suffer from the quadratic complexity of attention mechanisms, which undermines the sparse efficiency of neuromorphic computing. 
This bottleneck necessitates exploring linear-complexity backbones, such as State Space Models (SSMs)~\cite{gu2021efficiently}, to better align with the efficiency goals of SNNs.

\paragraph{Spiking State Space Models.}
Adapting the continuous, dense dynamics of SSMs to the discrete, sparse domain of SNNs presents significant challenges.
Although recent studies have explored integrating SNNs with SSMs~\cite{shen2025spikingssms, bal2024p, zhong2024spike, huang2025spikingmamba}, these implementations are predominantly hybrid.
Most retain real-valued state updates, dense matrix multiplications, or complex continuous activations like Softplus and SiLU within the recurrent loop.
Such designs fail to fully exploit the energy benefits of neuromorphic hardware, as the core state evolution remains reliant on floating-point arithmetic.
In contrast, SpikySpace proposes a fully spiking backbone where latent states are updated via sparse bit shift operations and additions.

\paragraph{Time-series Forecasting.}
Forecasting time series is critical for domains ranging from industrial monitoring to energy management. 
While transformers~\cite{informer, autoformer, patchtst} currently forecast with high accuracy, their dense attention maps and KV-cache requirements make them memory-bound and prohibitive for deployment on the edge. Model compression techniques like quantization and pruning~\cite{model_compression_ts}
alleviate some overhead but do not alter the underlying dense computation. 
Consequently, SNNs have been investigated for efficient forecasting~\cite{lv2024efficient, shibo2025ts, hu2025spikestag}. However, existing SNN forecasters often face a trade-off where they either rely on structures similar to RNNs that struggle with long horizons or adopt transformer architectures that suffer from quadratic complexity. There remains a lack of neuromorphic forecasting solutions that are both highly accurate and energy efficiency.

\section{Preliminary}
\paragraph{Problem Statement.}
Let $\mathbf{X} = \{\mathbf{x}_1, \ldots, \mathbf{x}_T\} \in \mathbb{R}^{T \times N}$ denote a multivariate time series with $N$ variables. Given a historical window of length $L$, $\mathbf{X}_{in} \in \mathbb{R}^{L \times N}$, the goal is to forecast the future horizon $H$, denoted as $\mathbf{Y}_{pred} \in \mathbb{R}^{H \times N}$. We aim to learn a mapping $f_\theta: \mathbf{X}_{in} \mapsto \mathbf{Y}_{pred}$ parameterized by an SNN, leveraging event-driven computation to process the temporal stream efficiently.

\paragraph{Spiking Neural Networks.}
SNNs process information through discrete spike events $s_t \in \{0, 1\}$ rather than continuous activations.
To bypass the training difficulties caused by non-differentiable spike generation, we utilize the ANN-to-SNN conversion framework, which maps the weight of a pre-trained ANN to a spiking equivalent.
Our architecture is built upon the Integrate-and-Fire (IF) neuron. In the discrete time domain, the membrane potential $V_t$ simply accumulates synaptic input $I_t$. Upon exceeding a threshold $\theta$, the neuron fires as $s_t = 1$ and resets via subtraction as $V_t \leftarrow V_t - \theta$.
For hardware efficiency, we adopt the Average Integrate-and-Fire (Avg-IF) variant~\cite{yan2025low}, which optimizes memory access patterns by synchronizing potential updates. Detailed algorithms are provided in Appendix~\ref{app:alg}. 
In this work, we denote the spike neuron used to generate spike trains as $\mathcal{SN}(\cdot)$, which accumulates synaptic inputs and triggers a binary output upon exceeding a firing threshold $\theta$.

\paragraph{State Space Models.}
State Space Models (SSMs) capture sequence dynamics through a latent state $h_t \in \mathbb{R}^D$. 
While theoretically grounded in continuous differential equations, efficient inference relies on the discretized recurrence:
\begin{align}
\label{eq:ssm_discrete}
h_t = \bar{\mathbf{A}}h_{t-1} + \bar{\mathbf{B}}x_t, \quad y_t = \mathbf{C}h_t,
\end{align}
where $\bar{\mathbf{A}}$ and $\bar{\mathbf{B}}$ are the discretized system parameters.
We provide the detailed derivation from the continuous differential equations to this discrete form in Appendix~\ref{app:ssm}. As Eq.~\eqref{eq:ssm_discrete} relies on dense floating-point multiplications for both state transitions and input projections. SpikySpace eliminates these bottlenecks through a strategic combination of spike-based encoding and power-of-two quantization.

\section{Method}
\subsection{Overall Framework}
\label{subsec:framework}

\begin{figure*}[t]
  \centering
  \includegraphics[width=0.9\textwidth]{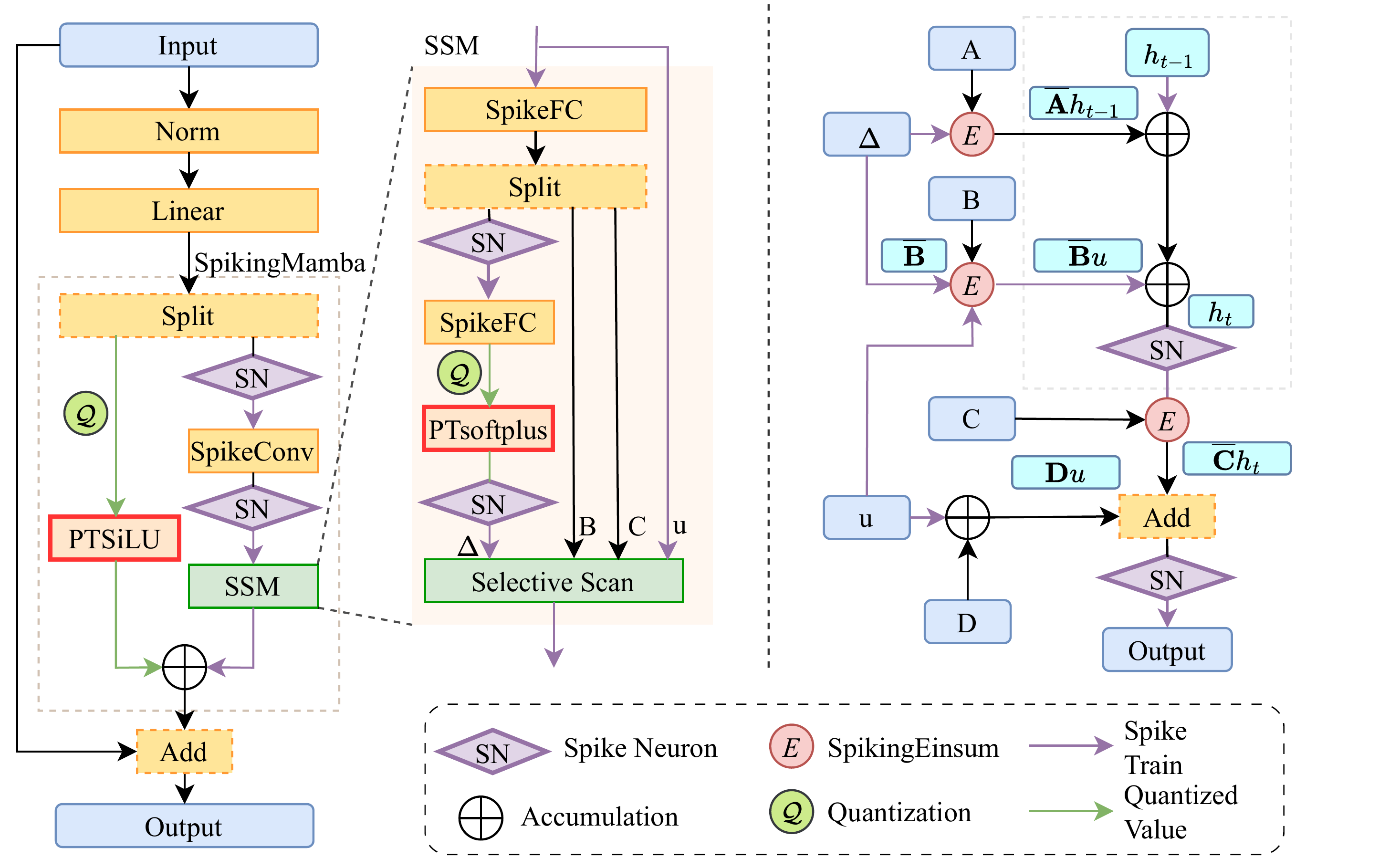}
  \caption{
  The overall architecture of SpikySpace. (Left) The model consists of stacked Spiking Mamba blocks followed by a prediction head. Each block employs a standard pre-norm residual design. Normalization is performed via RMSNorm, which admits hardware-efficient approximations~\cite{tang2024sorbet} for deployment. (Right) Inside the block, input features are projected and split into two branches comprising a residual gating branch and a data branch processed by our Spiking SSM. Weights are quantized to INT8, and activations are encoded as spikes, enabling an inference pipeline free of multiplication.
  } 
  \label{fig:main-figure}
\end{figure*}

As illustrated in Figure~\ref{fig:main-figure}, SpikySpace stacks multiple Spiking Mamba blocks to process multivariate time series. Each block integrates our proposed spiking scan mechanism within a residual backbone to capture temporal dependencies. Finally, a linear head maps the learned representations to the target prediction horizon.

We implement a co-design strategy focused on efficiency. Specifically, SSMs weight are quantized to 8-bit integers via post-training quantization, while dynamic activations are encoded as binary spikes. This combination allows us to replace dense matrix products with sparse accumulations of integer weights of lower bitwidths. 
Furthermore, we replace expensive operations in standard activations such as Softplus and SiLU with approximations based on bit shifts, resulting in a purely additive SSMs inference ideally suited for energy-constrained edge devices.



\subsection{State Space Module}
The Spiking State Space Module serves as the core component of SpikySpace, adapting the continuous dynamics of Mamba into a sparse mechanism driven by events. Functionally, we organize this module into sequential stages comprising convolutional spike encoding, selective parameterization, the spiking selective scan, and output gating.

\paragraph{Local Feature Encoding.}
Given the input data path $\mathbf{x}_{\text{in}} \in \mathbb{R}^{B \times L \times D}$, we first encode the continuous features into an initial spike train $\mathbf{s}_{\text{in}}$ via the spike neuron $\mathcal{SN}$.
Subsequently, to capture local temporal contexts, $\mathbf{s}_{\text{in}}$ undergoes a depth-wise separable 1D convolution followed by another spike neuron.
The encoding process is formalized as:

\begin{equation}
\mathbf{s_{\text{in}}} = \mathcal{SN}(\mathbf{x}_{\text{in}}),
\end{equation}
\begin{equation}
\mathbf{s} = \mathcal{SN}(\text{Conv1D}(\mathbf{s}_{\text{in}}),
\end{equation}
Here, $\mathbf{s} \in \{0, 1\}^{B \times L \times D}$ serves as the sparse input to the subsequent SSM.

\paragraph{Selective Parameterization.}
Given the input spike train $\mathbf{s}$, we instantiate a latent dynamical system governed by the parameters $(\boldsymbol{\Delta}, \mathbf{A}, \mathbf{B}, \mathbf{C}, \mathbf{D})$. Specifically, $\boldsymbol{\Delta}$ represents the discretization step size, while $(\mathbf{A}, \mathbf{B}, \mathbf{C}, \mathbf{D})$ define the state evolution and linear projections.

Let $N$ be the SSM state dimension and $r_{\Delta}$ be the rank of the step-size projection. We first apply a linear transformation with weights $\mathbf{W} \in \mathbb{R}^{d \times (r_{\Delta} + 2N)}$ to the input $\mathbf{s}$ and split the result into the projection weights $\mathbf{B}, \mathbf{C} \in \mathbb{R}^{B \times L \times N}$ and the raw step size $\boldsymbol{\Delta}_\text{raw} \in \mathbb{R}^{B \times L \times r_{\Delta}}$. Then $\boldsymbol{\Delta}_\text{raw}$ is passed through a spiking neuron to generate intermediate spikes, denoted as $\boldsymbol{\Delta}_\text{mid}$. These spikes are then processed via a quantized linear transformation and fed into our proposed PTSoftplus.
Finally, a second spiking neuron encodes the output into the binary gating signal $\boldsymbol{\Delta}_s$.
The entire parameter generation process is formulated as:
\begin{equation}
\begin{aligned}
\boldsymbol{\Delta}_\text{raw}, \mathbf{B}, \mathbf{C} & = \text{Split}(\mathbf{W}\mathbf{s}  + \mathbf{b}), \quad \boldsymbol{\Delta}_\text{mid} = \mathcal{SN}\!\left(\boldsymbol{\Delta}_\text{raw}\right), \\
\tilde{\boldsymbol{\Delta}} &= \mathcal{Q}(\mathbf{W}_\Delta \boldsymbol{\Delta}_\text{mid} + \mathbf{b}_\Delta),\\
\boldsymbol{\Delta}_\text{out} &=\mathrm{PTSoftplus}\!\left(\tilde{\boldsymbol{\Delta}}\right), \quad \boldsymbol{\Delta}_s = \mathcal{SN}\!\left(\boldsymbol{\Delta}_\text{out}\right),  \\
\end{aligned}
\end{equation}
Here, $\mathrm{PTSoftplus}$ is our hardware-friendly approximation of $\mathrm{Softplus}$ introduced in Section \ref{PTSoftplus}, and $\mathcal{Q}$ represents the quantization process as Equation~\ref{eq: cq}.
Together with the learnable parameters $\mathbf{A}$ and $\mathbf{D}$, these inputs are forwarded to the selective scan module.

\paragraph{Spiking Selective Scan.}
With the \textbf{spiking step size $\boldsymbol{\Delta}_s$} and discrete parameters, the system enters the recursive scan stage.
In standard Mamba, the transition matrix is obtained through an exponential operation $\overline{\mathbf{A}}_t = \exp(\boldsymbol{\Delta}_t \mathbf{A})$, which requires computationally intensive floating-point exponentiation.
In SpikySpace, the discretization is simplified by the binary nature of the step size $\boldsymbol{\Delta}_s$. 
Instead of continuous decay, the state update becomes a conditional bit-shift operation applied to the hidden state.
Specifically, we pre-compute the shift amount $\mathbf{K} = \lfloor \mathbf{A} \rceil$ derived from the static dynamics parameter.
At each time step, if the spike $\boldsymbol{\Delta}_s$ is active as a $1$, the state undergoes a bit-shift; otherwise, it remains preserved.
This effectively replaces the computationally expensive exponential operation with hardware-intrinsic bit operations:
\begin{equation}
\label{eq:shift_logic}
\overline{\mathbf{A}}_t =
\begin{cases}
    2^{\mathbf{K}} 
    & \text{if } \boldsymbol{\Delta}_{s,t} = 1, \\
    \mathbf{I} & \text{if } \boldsymbol{\Delta}_{s,t} = 0.
\end{cases}
\end{equation}
Importantly, replacing the natural exponential with a base-2 formulation does not reduce the expressive power of the model. Since $2^x = e^{\ln(2)x}$, the above transition is equivalent to scaling the effective step size by a constant factor $\ln (2)$. This corresponds to a fixed rescaling of the temporal dynamics, which can be fully compensated by the learnable parameters during training.

To maintain sparsity, we enforce the hidden state $\mathbf{h}_t$ to be event-driven. The recurrent dynamics at timestep $t$ are defined as:
\begin{align}
\overline{\mathbf{B}}_t &= \boldsymbol{\Delta}_{s,t} \mathbf{B}_t, \\
\mathbf{h}_t &= \mathcal{SN}(\overline{\mathbf{A}}_t \mathbf{h}_{t-1} + \overline{\mathbf{B}}_t \mathbf{s}_t),
\end{align}
Here, $\mathbf{h}_t \in \{0, 1\}^{B\times d_{\text{hidden}}\times n \times T}$ denotes the binary hidden state. 
Since $\mathbf{s}_t$ is binary, the input term $\overline{\mathbf{B}}_t \mathbf{s}_t$ reduces to sparse accumulation.

Finally, the module output $\mathbf{y}_t$ is computed via the dynamic projection $\mathbf{C}_t$ and the skip connection $\mathbf{D}$:
\begin{equation}
\mathbf{y}_t = \mathcal{SN}(\mathbf{C}_t \mathbf{h}_t + \mathbf{D}\mathbf{s}_t).
\end{equation}
This entire scanning mechanism achieves $O(L)$ complexity while operating strictly with accumulations and bit-shifts, making it highly efficient for edge deployment. The detailed procedure is summarized in Algorithm \ref{alg:sqssm}.

\begin{algorithm}[t]
\caption{SpikingMamba with Selective Scan}
\label{alg:sqssm}
\begin{algorithmic}[1]
\STATE \textbf{Input:} Spike train $\mathbf{s} \in \{0,1\}^{B\times L\times d}$; sequence length $L$
\STATE \textbf{Require:} Linear Projection Weight $\mathbf{W}$, $\mathbf{W}_{\Delta}$, Bias $\mathbf{b}$, $\mathbf{b}_{\Delta}$, SSM dynamics $\mathbf{A} \leftarrow -\exp(\mathbf{A}_{\log})$,\;
          $\mathbf{D}$
\STATE \textbf{Output:} Predicted Spike Train $\mathbf{y} = \{\mathbf{y}_t\}_{t=1}^{L}$

\STATE \textbf{Stage 1: Parameters Preparation} 
\STATE $\mathbf{h}_0 \leftarrow \mathbf{0}$ \COMMENT{Initialize the Hidden State $\in \{0,1\}$}

\STATE $\mathbf{U}_{\text{proj}} \leftarrow \text{IntegerAdd}(\mathbf{W}\mathbf{s}, \mathbf{b})$ \COMMENT{Sparse addition triggered by input spikes}
\STATE $(\boldsymbol{\Delta}_\text{raw}, \mathbf{B}, \mathbf{C}) \leftarrow \mathrm{Split}(\mathbf{U}_{\text{proj}})$
\STATE $\boldsymbol{\Delta}_\text{mid} \leftarrow \mathcal{SN}(\boldsymbol{\Delta}_\text{raw})$ \COMMENT{Encode $\boldsymbol{\Delta}_\text{mid}$ into spikes}
\STATE  $\tilde{\boldsymbol{\Delta}} \leftarrow \mathcal{Q}(\mathbf{W}_{\Delta}\boldsymbol{\Delta}_\text{mid} + \mathbf{b}_{\Delta})$
\STATE $\boldsymbol{\Delta}_s \leftarrow \mathcal{SN}(\mathrm{PTSoftplus}(\tilde{\boldsymbol{\Delta}}))$ \COMMENT{$\tilde{\boldsymbol{\Delta}}$ is integer}

\STATE \textbf{Stage 2: Selective Scan} 
\STATE {\texttt{// Keep $\boldsymbol{\Delta}_s, h_t, s_t$ as spikes}}
\FOR{$t = 1$ \TO $L$}
    \STATE $\overline{\mathbf{A}}_t =
\begin{cases}
    2^{\mathbf{K}}, \mathbf{K}=\lfloor A \rceil
    & \text{if } \boldsymbol{\Delta}_{s,t} = 1, \\
    \mathbf{I} & \text{if } \boldsymbol{\Delta}_{s,t} = 0.
\end{cases}$
    \STATE $\overline{\mathbf{B}}_t \leftarrow 
\begin{cases}
    \mathbf{B}_t
    & \text{if } \boldsymbol{\Delta}_{s,t} = 1, \\
    0 & \text{if } \boldsymbol{\Delta}_{s,t} = 0.
\end{cases}$
    \STATE $\mathbf{h}_t \leftarrow \mathcal{SN}(\overline{\mathbf{A}}_t \mathbf{h}_{t-1} + \overline{\mathbf{B}}_t \mathbf{s}_t)$
    \STATE $\mathbf{y}_t \leftarrow \mathcal{SN}(\mathbf{C}_t \mathbf{h}_t + \mathbf{D}\mathbf{s}_t)$ \COMMENT{Encode Output into Spikes}
\ENDFOR
\vspace{0.3em}
\STATE \textbf{return} $\mathbf{y}$
\end{algorithmic}
\end{algorithm}

\paragraph{Output Gating.}
The final stage employs a residual branch to stabilize the learned representations.
The residual signal $\mathbf{x}_{\text{res}}$ is processed through quantization and our PTSiLU activation, before being modulated by the scan output $\mathbf{y}$, followed by an output linear projection:
\begin{equation}
\mathbf{\tilde y} = \mathbf{y} \odot \text{PTSiLU}(\mathcal{Q}\!\left(\mathbf{x}_{\text{res}}\right))
\end{equation}
\begin{equation}
\mathbf{z} = \mathbf{W}_{\text{o}} \mathbf{\tilde y} + \mathbf{b}_{\text{o}}.
\end{equation}

Here, the binary spike train $\mathbf{y}$ acts as a sparse mask.
As introduced in Section~\ref{PTSiLU}, PTSiLU employs power-of-two scaling, the non-zero elements in $\tilde{\mathbf{y}}$ remain compatible with bit-wise logic.
This transforms the final projection with the weight of $\mathbf{W}{\text{o}}$ into a sparse shift-and-accumulate operation, allowing the entire block to bypass dense multiplication.

\subsection{Power-of-Two Softplus}
\label{PTSoftplus}
Standard SSMs rely on the Softplus function, defined as $\mathrm{Softplus}(x) = \ln(1 + e^x)$, to ensure the positivity of the timescale parameter $\boldsymbol{\Delta}$. As Softplus depends heavily on operations like exponential and logarithmic that are computationally prohibitive for neuromorphic hardware, we propose Power-of-Two Softplus(PTSoftplus) to resolve this. By utilizing exclusively power-of-two scaling, PTSoftplus replaces expensive floating-point units with efficient bit-shifts and additions, while preserving the convexity required for stable dynamics. 
Specifically, it is defined as the following function:
\begin{equation}
\\ 
\text{PTSoftplus}(x) =
\begin{cases}
2^x, & \text{if } x < x_c, \\
x + C, & \text{otherwise},
\end{cases}
\end{equation}
To guarantee $C^1$ continuity at the junction $x_c$, the constants are analytically derived as:
\[
\begin{gathered}
    x_c = \log_2 \big({1 \over \ln 2}\big) \approx 0.5288, \\
    C = {1\over \ln 2} - x_c \approx 0.9139.
\end{gathered}
\]
\paragraph{Theoretical Properties.} We provide theoretical guarantees to ensure that our approximation supports stable gradient-based training.

\begin{lem}\label{lem:ptsoftplus_continuous}
    {\rm PTSoftplus} is continuously differentiable.
\end{lem}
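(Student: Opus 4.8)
The plan is to exploit the fact that $\mathrm{PTSoftplus}$ is assembled from two elementary functions, each smooth on its own domain, so that the only place differentiability could possibly fail is the single breakpoint $x = x_c$. On the open interval $(-\infty, x_c)$ the function equals $2^x$, which is $C^\infty$, and on $(x_c, \infty)$ it equals the affine map $x + C$, which is likewise $C^\infty$. Hence it suffices to establish three matching conditions at $x_c$: continuity of the value, agreement of the two one-sided first derivatives, and continuity of the resulting derivative.

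First I would verify continuity at $x_c$. The left-hand limit is $2^{x_c}$, and by the definition $x_c = \log_2(1/\ln 2)$ this equals $1/\ln 2$. The right-hand value is $x_c + C$, and substituting $C = 1/\ln 2 - x_c$ gives $x_c + C = 1/\ln 2$. The two sides coincide, so $\mathrm{PTSoftplus}$ is continuous at $x_c$; the constant $C$ is chosen precisely to enforce this.

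Next I would compare the one-sided derivatives. Differentiating the left branch gives $\frac{d}{dx}\,2^x = 2^x \ln 2$, whose value at $x_c$ is $2^{x_c}\ln 2 = (1/\ln 2)\ln 2 = 1$. Differentiating the right branch gives $\frac{d}{dx}(x + C) = 1$. The one-sided derivatives agree, so $\mathrm{PTSoftplus}$ is differentiable at $x_c$ with derivative $1$; this is exactly why $x_c$ is placed at $\log_2(1/\ln 2)$, namely so that the slope of the exponential branch at the junction is unity. Finally, the derivative equals $x \mapsto 2^x \ln 2$ on $(-\infty, x_c)$ and the constant $1$ on $(x_c, \infty)$, and both one-sided limits at $x_c$ equal $1$, so the derivative is continuous there as well. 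Combining these facts yields $\mathrm{PTSoftplus} \in C^1(\mathbb{R})$.

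The argument involves no genuine obstacle; the only subtlety is that the seemingly arbitrary constants $x_c$ and $C$ are in fact reverse-engineered so that both value and slope match at the breakpoint. The essential ``work'' is to confirm that the two algebraic identities $2^{x_c} = x_c + C$ and $2^{x_c}\ln 2 = 1$ hold simultaneously for the stated parameter values, which is precisely what guarantees the seam is $C^1$ rather than merely continuous.
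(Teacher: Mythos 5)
Your proposal is correct and follows essentially the same route as the paper's proof: reduce the problem to the single breakpoint $x = x_c$, verify value matching via $2^{x_c} = x_c + C = 1/\ln 2$, and verify slope matching via $(\ln 2)\cdot 2^{x_c} = 1$. The only (minor) difference is that you explicitly check agreement of the one-sided derivatives of the function at $x_c$ before concluding continuity of the derivative, whereas the paper checks only that the one-sided limits of the piecewise derivative formula agree --- your version is marginally more careful on that standard point, but the computations and structure are identical.
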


The proof of Lemma~\ref{lem:ptsoftplus_continuous} is deferred to Appendix~\ref{sec:ptsoftplus_continuous}.
Next, we demonstrate that the proposed PTSoftplus function closely approximates Softplus. Specifically, the functional deviation is bounded by a small constant. The derivatives of PTSoftplus and Softplus also exhibit pointwise closeness, ensuring the approximation does not detrimentally affect the model's training dynamics.

\begin{lem}\label{lem:softplus_closeness}
    The maximum deviation between the {\rm PTSoftplus} and the {\rm Softplus} function is bounded by 0.914, i.e.
    \[
        \|{\rm PTSoftplus} - {\rm Softplus}\|_\infty  \le 0.914.
    \]
    Moreover, the maximum deviation between the derivatives of {\rm PTSoftplus} and {\rm Softplus} is bounded by 0.371, i.e. 
    \[
        \|{\rm PTSoftplus}' - {\rm Softplus}'\|_\infty \le 0.371.
    \]
\end{lem}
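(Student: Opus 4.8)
The plan is to study the error function $E(x) = \mathrm{PTSoftplus}(x) - \mathrm{Softplus}(x)$ together with its derivative $D(x) = E'(x)$, analyzing each piecewise across the breakpoint $x_c$ and using Lemma~\ref{lem:ptsoftplus_continuous} to guarantee that both $E$ and $D$ are continuous on $\mathbb{R}$. Recall that $\mathrm{Softplus}'(x) = \sigma(x) := 1/(1+e^{-x})$ is the logistic sigmoid. On the linear branch $x \ge x_c$ one has $E(x) = x + C - \ln(1+e^x) = C - \ln(1+e^{-x})$ and $D(x) = 1 - \sigma(x) = \sigma(-x)$, while on the exponential branch $x < x_c$ one has $E(x) = 2^x - \ln(1+e^x)$ and $D(x) = (\ln 2)\,2^x - \sigma(x)$. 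Since $2^{x_c} = 1/\ln 2$ gives $(\ln 2)\,2^{x_c} = 1$, the two one-sided values of $D$ agree at $x_c$, consistent with the $C^1$ property already established.

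I would prove the derivative bound first, as it also drives the function bound. On the linear branch $D(x) = \sigma(-x)$ is strictly decreasing, so on $[x_c,\infty)$ it satisfies $0 < D \le 1 - \sigma(x_c)$, with supremum attained at $x_c$ and $D \to 0$ as $x \to \infty$. On the exponential branch I would show $D(x) = (\ln 2)\,2^x - \sigma(x)$ is positive and strictly increasing, so it climbs from $0$ (as $x \to -\infty$) up to the matching value $1 - \sigma(x_c)$ at $x_c$. Combining the branches, $D \ge 0$ everywhere with a unique global maximum at $x_c$ equal to $1 - \sigma(x_c)$; evaluating $x_c \approx 0.5288$ gives $1 - \sigma(x_c) \approx 0.371$, which yields $\|\mathrm{PTSoftplus}' - \mathrm{Softplus}'\|_\infty \le 0.371$.

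Next I would leverage $D = E' \ge 0$ to conclude that $E$ is monotonically increasing on all of $\mathbb{R}$. Its endpoint limits are $E(x) \to 0$ as $x \to -\infty$ (both $2^x$ and $\ln(1+e^x)$ vanish) and $E(x) = C - \ln(1+e^{-x}) \to C$ as $x \to +\infty$, where $\ln(1+e^{-x}) > 0$ forces $E(x) < C$ at every finite $x$. Monotonicity then confines $E$ to the open interval $(0,C)$, so $\|\mathrm{PTSoftplus} - \mathrm{Softplus}\|_\infty = C = 1/\ln 2 - x_c \approx 0.9139 \le 0.914$, as claimed.

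The main obstacle will be the exponential-branch analysis of $D$: establishing both its positivity and its strict increase reduces to the transcendental inequalities $(\ln 2)\,2^x > \sigma(x)$ and $(\ln 2)^2\,2^x > \sigma(x)\big(1-\sigma(x)\big)$ for $x < x_c$. I would handle these by clearing denominators (for instance rewriting the first as $(\ln 2)\,2^x(1+e^x) > e^x$) and comparing exponential rates: because $2^x = e^{x\ln 2}$ decays strictly slower than $e^x$ as $x \to -\infty$ (since $\ln 2 < 1$), the power-of-two term dominates in the left tail, while at the right endpoint the inequalities hold with margin thanks to $(\ln 2)\,2^{x_c} = 1$. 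The remaining task is to rule out a sign change in between, which I would do by verifying that the single auxiliary function $(\ln 2)\,2^x(1+e^x)^2 - e^x$ (respectively the derivative governing monotonicity) stays positive throughout $(-\infty, x_c)$.
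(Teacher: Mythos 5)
Your proposal is correct and follows essentially the same route as the paper's proof: both analyze the difference $f = \mathrm{PTSoftplus} - \mathrm{Softplus}$, establish that $f' > 0$ everywhere (increasing on the exponential branch, decreasing as $1/(1+e^{x})$ on the linear branch) so that the derivative gap peaks at $x_c$ with value $1-\sigma(x_c) = 1/(1+e^{x_c}) \approx 0.371$, and then use the monotonicity of $f$ together with its limits $0$ and $C \approx 0.9139$ to bound the function gap. The only differences are cosmetic — you prove the derivative bound before the function bound, and you carry out the branch-wise positivity/monotonicity analysis (which the paper defers to its supplementary materials) via explicitly cleared transcendental inequalities.
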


\begin{figure}
\begin{subfigure}{.4\linewidth}
    \centering
    \includegraphics[width=\linewidth]{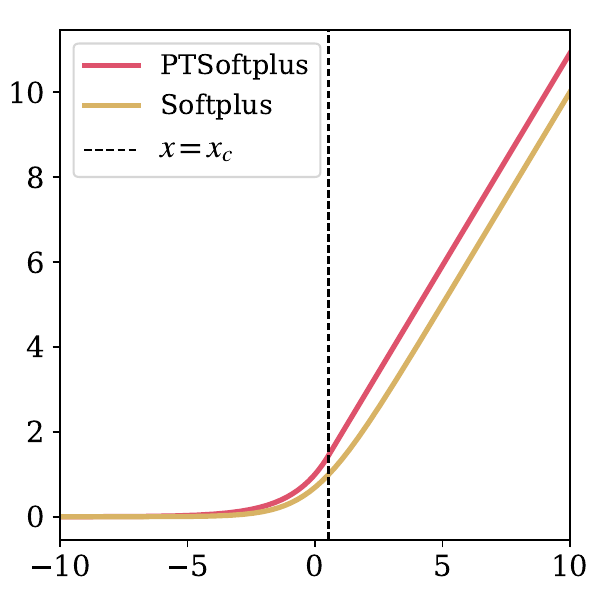}
\end{subfigure}
\begin{subfigure}{.6\linewidth}
    \centering
    \includegraphics[width=\linewidth]{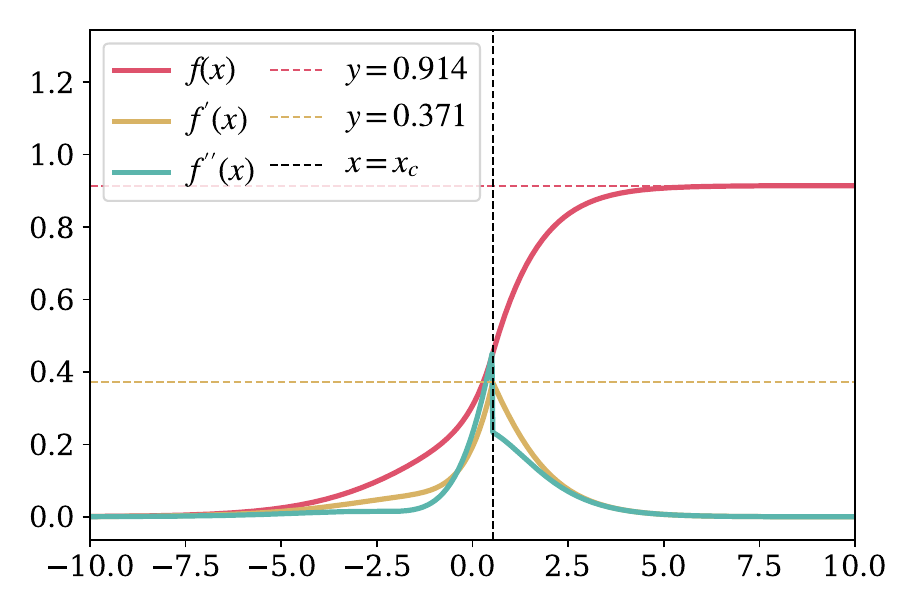}
\end{subfigure}
    \caption{{\em Left:} The PTSoftplus and the Softplus function. {\em Right:} The bounds on the deviations between these two functions.}
    \label{fig:softplus}
\end{figure}

The detailed proof of Lemma~\ref{lem:softplus_closeness} is relegated to Appendix~\ref{sec:softplus_closeness}. Fig.~\ref{fig:softplus} visually demonstrates the function shapes and the deviations between the two functions and their respective derivatives.

\subsection{Power-of-Two SiLU}
\label{PTSiLU}
Similarly, the output gating branch relies on the SiLU activation, defined as $\mathrm{SiLU}(x) = x \cdot \sigma(x)$, sharing similar hardware bottlenecks as Softplus.
Thus, we propose Power-of-Two SiLU (PTSiLU), a piecewise approximation tailored for hardware consistency defined as the following function:



\begin{equation}
\label{eq:PTswish_def}
\text{PTSiLU}(x) = 
\begin{cases}
-2^x, & \text{if } x < \bar{x}_c, \\
2^{-x - 1} + x + \bar{C}, & \text{otherwise},
\end{cases}
\end{equation}
where the parameters
\[
\begin{gathered}
    \bar{x}_c =  \log_2\Big( {\sqrt{1 + 2 \cdot (\ln 2)^2} - 1 \over 2 \ln 2} \Big) \approx -1.7920, \\
    \bar{C} = -\frac{\sqrt{1 + 2 \cdot (\ln 2)^2}}{\ln 2} - \bar{x}_c \approx -0.2282.
\end{gathered}
\]
\paragraph{Theoretical Properties.} Similar to PTSoftplus, we first provide theoretical guarantees to ensure that PTSiLU supports stable gradient-based training.

\begin{lem}\label{lem:ptsilu_continuity}
    {\rm PTSiLU} is continuously differentiable. 
\end{lem}

The proof of Lemma~\ref{lem:ptsilu_continuity} is deferred to Appendix~\ref{sec:ptsilu_continuity}.
Next, we show that PTSiLU closely approximates SiLU, evidenced by the bounded difference between their function values. Furthermore, the proximity of their first-order derivatives ensures that this approximation does not impair the convergence or stability of the model's training process.

\begin{lem}\label{lem:ptsilu_closeness}
    The maximum deviation between the {\rm PTSiLU} and the {\rm SiLU} function is bounded by 0.316, i.e.
    \[
        \|{\rm PTSiLU} - {\rm SiLU} \|_\infty \le 0.316.
    \]
    Moreover, the maximum deviation between the derivatives of {\rm PTSiLU} and {\rm SiLU} is bounded by $0.263$, i.e.
    \[
        \|{\rm PTSiLU}' - {\rm SiLU}' \|_\infty \le 0.263.
    \]
\end{lem}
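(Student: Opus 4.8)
The plan is to follow the same template as the PTSoftplus closeness argument (Lemma~\ref{lem:softplus_closeness}), but adapted to the fact that the relevant difference is no longer monotone, so its extrema sit strictly inside the domain instead of at $\pm\infty$. I would set $g(x) = \mathrm{PTSiLU}(x) - \mathrm{SiLU}(x)$; since PTSiLU is continuously differentiable (established in the preceding lemma) and SiLU is smooth, $g$ is continuously differentiable. First I would record the two tail limits: writing $\sigma$ for the logistic sigmoid, as $x\to-\infty$ both $-2^x$ and $x\,\sigma(x)$ vanish so $g(x)\to 0$, while as $x\to+\infty$ the term $2^{-x-1}$ vanishes and $x(1-\sigma(x))\to 0$ so $g(x)\to\bar{C}\approx-0.228$. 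Because neither tail value reaches the claimed bound, the supremum of $|g|$ must be attained at an interior point, which is the key structural difference from the Softplus case.

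For the function-value bound I would split at the breakpoint $\bar{x}_c\approx-1.79$. On $(-\infty,\bar{x}_c)$ the branch is $g(x) = -2^x - \mathrm{SiLU}(x)$, and since $\bar{x}_c$ is already well into the tail, both terms are small and $|g|$ is comfortably below $0.316$. On $[\bar{x}_c,\infty)$ the branch is $g(x) = 2^{-x-1} + x + \bar{C} - \mathrm{SiLU}(x)$, and I would study $g'(x) = 1 - (\ln 2)\,2^{-x-1} - \mathrm{SiLU}'(x)$ to show it has the sign pattern $-,+,-$ on this interval: $g$ first dips to a shallow local minimum, then rises to a unique global maximum $x^\star$, and finally decreases monotonically to its tail value $\bar{C}$. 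Since $g'(x^\star)=0$ is transcendental, I would trap $x^\star$ in a short interval by checking the sign of $g'$ at rational test points (for instance $g'>0$ at $x=0.5$ and $g'<0$ at $x=0.6$) and then bound $g(x^\star)$ by evaluating $g$ at the interval endpoints together with this sign information. Combining the peak value ($\approx0.314$) with $|\bar{C}|\approx0.228$ yields $\|\mathrm{PTSiLU}-\mathrm{SiLU}\|_\infty\le0.316$.

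For the derivative bound I would apply the same localization one order higher. Computing the branchwise expressions for $g'$ and the tail limits $g'(\pm\infty)=0$, I would use $g''$ to show that $g'$ has a single interior positive maximum (near $x\approx-0.7$) and a single interior negative minimum (near $x\approx1.9$), each governed again by a transcendental equation. Trapping each extremizer in a short interval by checking the sign of $g''$ at rational points and evaluating $g'$ at the endpoints gives a positive peak below $0.263$ and a negative dip whose magnitude is well below $0.263$, so $\|\mathrm{PTSiLU}'-\mathrm{SiLU}'\|_\infty\le0.263$.

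The main obstacle is precisely this loss of monotonicity. In the Softplus proof the difference was globally monotone, so its extreme values coincided with the easily computed $\pm\infty$ limits; here the decisive maxima and minima of both $g$ and $g'$ occur at interior points defined by transcendental equations that mix a power-of-two term with the logistic sigmoid and admit no closed form. The workaround is to avoid solving these equations exactly: I localize each extremum inside a short interval using only the sign of the next-order derivative at a handful of rational points, then convert that localization into a clean numerical upper bound that slightly overshoots the true supremum (hence the stated constants $0.316$ and $0.263$ rather than the sharp values $\approx0.314$ and $\approx0.260$). The detailed interval estimates would be deferred to the supplementary material, paralleling the treatment of Lemma~\ref{lem:softplus_closeness}.
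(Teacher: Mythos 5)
Your skeleton (the difference function $g$, the tail limits $0$ and $\bar{C}\approx-0.228$, the reduction to interior extrema) and all of your numerical landmarks agree with the paper's: the peak of $g$ is indeed $\approx 0.315$ near $x\approx 0.55$, and $g'$ ranges over roughly $[-0.178,\,0.261]$ with extrema near $x\approx-0.7$ and $x\approx 1.9$. The genuine gap is at the step carrying all the logical weight. Your bounds rest on global structural claims --- ``$g'$ has sign pattern $-,+,-$ on $[\bar{x}_c,\infty)$,'' ``$g'$ has a single interior positive maximum and a single interior negative minimum'' --- and the method you offer, checking signs of $g'$ (or $g''$) at a handful of rational points, cannot prove them. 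Finitely many sign evaluations certify that \emph{at least one} sign change lies in each bracketing interval; they cannot certify that there are \emph{no other} sign changes elsewhere. Without that exhaustiveness, trapping one extremum and bounding $g$ near it says nothing about the supremum over the rest of the line: for all your argument shows, $g$ could have another, larger hump that your test points missed. Deferring the question to $g''$ reproduces the identical problem one derivative up, since $g''$ mixes the same power-of-two and sigmoid terms. The paper's proof is organized precisely to avoid this: it proves one analytic estimate, $\|\bar{f}''\|_\infty \le (\ln 2)^2\, 2^{-\bar{x}_c-1}+0.5 \le 1.332$, then evaluates $\bar{f}'$ (and afterwards $\bar{f}$) on a grid of spacing $0.001$ covering all of $[-10,10]$, adds the Lipschitz slack ($0.261+1.332\cdot 0.001\le 0.263$, then $0.315+0.263\cdot 0.001 \le 0.316$), and treats the tails separately. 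That is exhaustive by construction and needs no knowledge of how many critical points exist or where they lie. Note that a dense grid plus the $\|\bar{f}''\|_\infty$ bound is also exactly what would be needed to make your sign-pattern claims rigorous, so any repair of your argument essentially collapses into the paper's.

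Two further quantitative problems, both fixable but worth flagging. First, even granting the sign pattern, your trap $[0.5,0.6]$ is too wide: $g(0.5)\approx 0.314$, and converting endpoint data into a bound at the interior maximum costs a slack of (trap width) $\times$ (local bound on $g'$); with the only available derivative control $\|g''\|_\infty\le 1.332$, the sharpest such slack over a width-$0.1$ interval is about $0.003$, landing you near $0.318>0.316$. You would need a trap roughly an order of magnitude narrower (e.g.\ $[0.54,0.56]$, where $g(0.54)\approx 0.3146$ and the slack drops below $0.001$). Second, on the left branch your ``both terms are small'' reasoning does not suffice: $2^{\bar{x}_c}\approx 0.289$ and $\sup_{x\le\bar{x}_c}|\mathrm{SiLU}(x)|\approx 0.256$, so the triangle inequality only yields $\approx 0.545$. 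You must instead use that $-2^x$ and $-\mathrm{SiLU}(x)$ have opposite signs on $(-\infty,\bar{x}_c)$, which pins $g$ into the interval $(-0.289,\,0.256)$ and hence below the claimed $0.316$.
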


\begin{figure}
\begin{subfigure}{.4\linewidth}
    \centering
    \includegraphics[width=\linewidth]{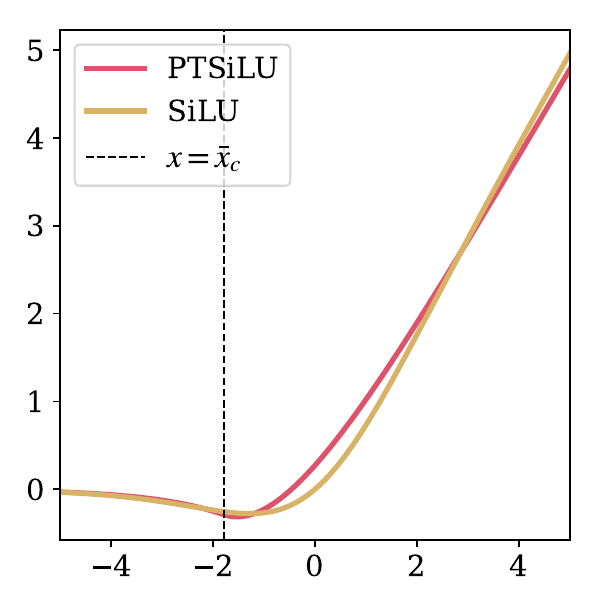}
\end{subfigure}
\begin{subfigure}{.6\linewidth}
    \centering
    \includegraphics[width=\linewidth]{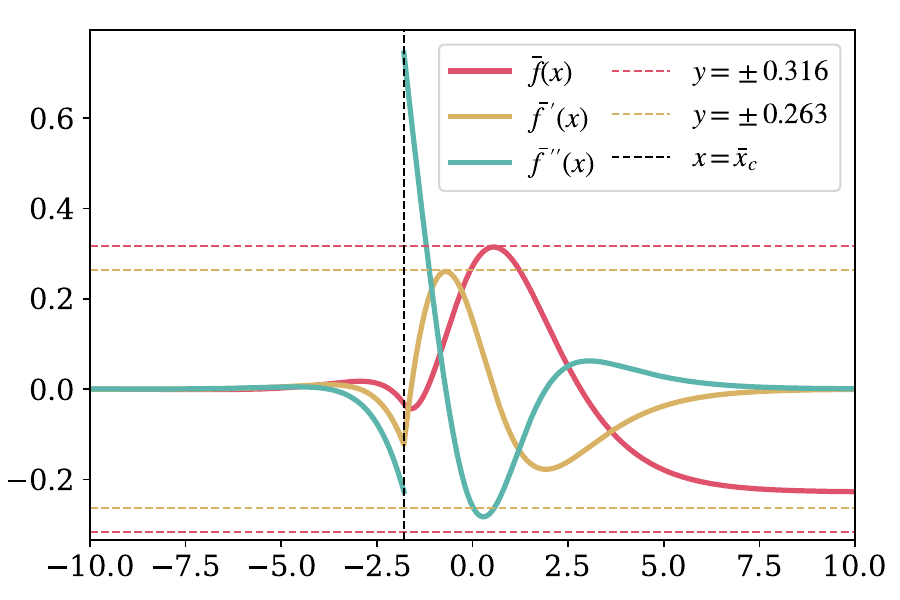}
\end{subfigure}
    \caption{{\em Left:} The PTSiLU and the SiLU function. {\em Right:} The bounds on the deviations between these two functions.}
    \label{fig:silu}
\end{figure}

The proof of Lemma~\ref{lem:ptsilu_closeness} is in Appendix~\ref{sec:ptsilu_closeness}. Fig.~\ref{fig:silu} visually confirms the small deviations in both the function shapes and the derivatives of the two functions.

\subsection{Training Process}
As we adopted an ANN-to-SNN conversion method to obtain our SpikySpace, we first trained a quantized ANN model with PTsoftplus and PTSiLU as the activation functions and then converted it into an SNN model.
For activation quantization, we follow BiT~\cite{liu2022bit} and Sorbet~\cite{tang2024sorbet} to employ Learned Step-size Quantization (LSQ). The quantized activation $X_Q$ is derived from the real-valued input $X_R$ as:

\begin{equation}
\label{eq: cq}
\mathcal{Q}(X_R) = \alpha \cdot 
\operatorname{clip}\!\left(
\operatorname{round}\!\left(\frac{X_R - \beta}{\alpha}\right),
Q_n,\; Q_p
\right)
+ \beta
\end{equation}

where $\alpha$ is the learnable step size and $\beta$ is the offset which can be further quantized to power-of-two values without accuracy loss as discussed in~\cite{tang2024sorbet}. Detailed settings are provided in Appendix~\ref{app:taq}.

\begin{table*}[t]
\centering
\caption{Experimental results of time-series forecasting across four datasets. Models are listed as rows. For each dataset, results include horizons $L \in \{3, 6, 12, 24\}$. The best and second-best results in \textbf{SNNs} are highlighted in \textbf{bold} and \underline{underlined}. `$\uparrow$' means `the higher the better', and `$\downarrow$' means the opposite, i.e., `the lower the better'.}
\label{tab:forecast_comparison}

\setlength{\tabcolsep}{3pt}      
\renewcommand{\arraystretch}{1.05}  

\resizebox{\textwidth}{!}{
\begin{tabular}{l c c  cccc cccc cccc cccc c}
\toprule
\multirow{2}{*}{Model} & \multirow{2}{*}{SNN} & \multirow{2}{*}{Metric} &
\multicolumn{4}{c}{\textbf{Metr-la}} & 
\multicolumn{4}{c}{\textbf{Pems-bay}} & 
\multicolumn{4}{c}{\textbf{Solar}} & 
\multicolumn{4}{c}{\textbf{Electricity}} & 
\multirow{2}{*}{\textbf{Avg.}} \\
\cmidrule(lr){4-7} \cmidrule(lr){8-11} \cmidrule(lr){12-15} \cmidrule(lr){16-19}
& & & 3 & 6 & 12 & 24 & 3 & 6 & 12 & 24 & 3 & 6 & 12 & 24 & 3 & 6 & 12 & 24 \\
\midrule

\multirow{2}{*}{GRU} & \multirow{2}{*}{$\times$} 
& $R^2$$\uparrow$   & .803 & .761 & .682 & .614
& .783 & .769 & .696 & .696 
& .962 & .950 & .907 & .875
& .983 & .981 & .980 & .972 
& .834 \\
& & RRSE$\downarrow$ & .448 & .507 & .585 & .663
& .479 & .504 & .638 & .638
& .508 & .548 & .569 & .572
& .518 & .522 & .531 & .506 
& .546 \\
\midrule

\multirow{2}{*}{iTransformer} & \multirow{2}{*}{$\times$}
& $R^2$$\uparrow$   & .864 & .849 & .763 & .538
 & .938 & .888 & .797 & .629
 & .974 & .964 & .918 & .879
 & .983 & .977 & .977 & .977
 & .870 \\
&& RRSE$\downarrow$ & .344 & .410 & .514 & .652
 & .269 & .362 & .488 & .659
 & .562 & .584 & .575 & .541
 & .213 & .506 & .460 & .305 
 & .465 \\
\midrule

\multirow{2}{*}{Mamba} & \multirow{2}{*}{$\times$}
& $R^2$$\uparrow$   & .891 & .842 & .754 & .612
& .922 & .875 & .778 & .606
&.973 & .954 & .915 & .824
& .989 & .987 & .986 & .984 
& .868 \\
&& RRSE$\downarrow$  & .349 & .419 & .524 & .658
& .303 & .383 & .509 & .679
& .168 & .220 & .300 & .430
& .183 & .203 & .213 & .225 
& .360 \\
\midrule
\midrule

\multirow{2}{*}{SpikeTCN} & \multirow{2}{*}{$\checkmark $}
& $R^2$$\uparrow$   & .845 & .799 & .718 & .602
 & .862 & .829 & .782 & \underbar{.681}
 & .946 & .937 & .893 & .840
 & .974 & .970 & .968 & .963 
 & .851 \\
&& RRSE$\downarrow$ & .415 & .473 & .560 & .665
 & .401 & .448 & \underbar{.504} & \textbf{.582}
 & \underbar{.205} & .252 & .409 & .541
 & .324 & .333 & .338 & .342
 & .425 \\
\midrule

\multirow{2}{*}{SpikeRNN} & \multirow{2}{*}{$\checkmark $}
& $R^2$$\uparrow$   & .784 & .731 & .661 & .557
 & .763 & .721 & .710 & \textbf{.693}
 & .933 & .923 & .903 & .820
 & .984 & .978 & .979 & .964 
 & .819 \\
&& RRSE$\downarrow$ & .490 & .547 & .614 & .702 
 & .527 & .571 & .582 & \underbar{.599}
 & .246 & .278 & .343 & .425
 & .207 & .280 & .314 & .317 
 & .440 \\
\midrule

\multirow{2}{*}{iSpikformer} & \multirow{2}{*}{$\checkmark $}
& $R^2$$\uparrow$   & .805 & .765 & .723 & .549
 & \textbf{.935} & \textbf{.884} & \textbf{.787} & .622
 & \underbar{.972} & \textbf{.955} & \underbar{.918} & \underbar{.869}
 & .982 & .974 & .973 & .974 
 & .856 \\
&& RRSE$\downarrow$ & .466 & .512 & .555 & .709 
 & \textbf{.276} & \textbf{.369} & \textbf{.499} & .665
 & .217 & \textbf{.218} & \textbf{.295} & \textbf{.372}
 & .214 & .284 & .284 & .284 
 & .389 \\
\midrule

\multirow{2}{*}{SpikeSTAG} & \multirow{2}{*}{$\checkmark $}
& $R^2$$\uparrow$   & \underbar{.873} & \underbar{.822} & \underbar{.734} & \underbar{.590}
 & .874 & .835 & \underbar{.787} & .626
 & \textbf{.973} & .950 & \textbf{.926} & \textbf{.879}
 & \underbar{.987} & \underbar{.986} & \underbar{.985} & \underbar{.984} 
 & \underbar{.863} \\
&& RRSE$\downarrow$ & \underbar{.375} & \underbar{.430} & \underbar{.535} & \underbar{.685}
& .384 & .439 & .537 & .661
 & .246 & .272 & .315 & \underbar{.390}
 & \underbar{.207} & \underbar{.222} & \underbar{.224}& \underbar{.225}
 & \underbar{.384} \\
\midrule
\midrule
\multirow{2}{*}{\textbf{SpikySpace}} & \multirow{2}{*}{$\checkmark $}
& $R^2$$\uparrow$   & \textbf{.895} & \textbf{.847} & \textbf{.760} & \textbf{.610}
 & \underbar{.921} & \underbar{.866} & .761 & .605
& .971 & \underbar{.953} & .914 & .825 
 & \textbf{.994} & \textbf{.992} & \textbf{.990} & \textbf{.990}  
 & \textbf{.869} \\

&& RRSE$\downarrow$ & \textbf{.342} & \textbf{.413} & \textbf{.517} & \textbf{.659}
 & \underbar{.304} & \underbar{.395} & .528 & .679 
 & \textbf{.176} & \underbar{.223} & \underbar{.301} & .430 
 & \textbf{.137} & \textbf{.157} & \textbf{.176} & \textbf{.179} 
 & \textbf{.351} \\

\bottomrule
\bottomrule
\end{tabular}
}
\label{tab:flat_comparison_with_avg}
\end{table*}

\section{Result}
We evaluate SpikySpace on four multivariate time-series benchmarks across traffic, solar energy, and electricity domains.
The comparison includes mainstream ANNs such as GRU and iTransformer, alongside state-of-the-art SNNs like iSpikformer and SpikeSTAG.
We report the Coefficient of Determination $R^2$ and Root Relative Squared Error (RRSE) across prediction horizons of \{3, 6, 12, 24\} steps following previous works.
All experiments utilize an NVIDIA A100 GPU with a timestep of $T=3$.
Dataset specifications and hyperparameter details are provided in Appendix~\ref{app:exp_details}.

\subsection{Comparing with the Baseline}
As shown in Table~\ref{tab:forecast_comparison}, SpikySpace achieves state-of-the-art SNN performance, ranking first on two datasets and remaining highly competitive on the others.
On the Electricity dataset, our model achieves an average $R^2$ of 0.992, even outperforming dense iTransformer at 0.978.
This result indicates that our method effectively preserves high-precision dynamics despite the quantization.
On traffic dataset Metr-la, SpikySpace surpasses the strongest SNN baseline by a margin of 3.0\% on average (0.778 vs 0.755).
This confirms that the spiking selective scan can capture long-range dependencies better than traditional recurrence.
These results demonstrate that SpikySpace closes the accuracy gap between SNNs and dense ANNs while maintaining neuromorphic efficiency.

\subsection{Energy Analysis}

\begin{table*}[t]
\centering
\caption{Comparison of the model size, energy consumption, and performance across SpikySpace and baselines. The energy reduction is calculated relative to that of iTransformer. The measurements are based on the Electricity dataset with a prediction horizon of 3.}
\label{tab:energy-comparison}
\begingroup
\normalsize
\begin{tabular}{lccccccc}
\toprule
\textbf{Model} & \textbf{Backbone} & \textbf{SNN} & \textbf{Param (M)}  & \textbf{Energy (mJ)} & \textbf{Energy Reduction} & $\boldsymbol{R^2}$$\uparrow$ & $\boldsymbol{RRSE}$$\downarrow$ \\
\midrule
SpikySpace     & SSM               &  \checkmark 
& 0.868 & 0.17 & 98.20\%           & 0.994 & 0.137  \\
SpikeSTAG      & LSTM, transformer &  \checkmark 
& 1.566 & 4.39 & 53.64\%           & 0.987 & 0.207  \\
iSpikformer    &       transformer &  \checkmark 
& 1.634 & 3.19 & 66.30\%           & 0.982 & 0.214  \\
iTransformer   &       transformer &  $\times$     
& 1.634 & 9.47 & /                 & 0.983 & 0.213  \\
\bottomrule
\end{tabular}
\endgroup
\end{table*}

The computational cost of SNNs depends on the time-window length $T$ and the spike rate $s$~\cite{yan2024reconsidering}.
Unlike prior SNN works that estimate energy solely based on synaptic operations, our evaluation explicitly accounts for the hardware bottlenecks, such as memory access and state updates. Specifically, the energy of a spiking layer can be represented as:
\begin{equation}
E_{\text{total}} = T \cdot s \cdot E_{\text{ACC}} + T \cdot s \cdot (E_{\text{move}} + E_{\text{weight}})
\end{equation}

, where $E_\text{ACC}$ is the accumulation energy, $E_\text{move}$ is the spike movement energy and $E_\text{weight}$ is the SRAM weight access energy. All the results are based on the energy data we measured on a commercial 22nm process.

However, in our SpikySpace, the recursive accumulation of the state transition $\overline{\mathbf{A}}_t$ introduces a positive value and tends to saturate the membrane potential.
Without intervention, this drives the firing rate toward $\sim\!100\%$, where spikes occur at every timestep, negating the sparsity benefit.
To address this, we implement a {\em temporal rescaling} strategy. The firing threshold is scaled $T$ times with weights adjusted correspondingly to be $T$ times larger.
This effectively aggregates the information of $T$ steps, reducing the firing frequency from $1$ to $1/T$ while maintaining the consistency of the output signal magnitude.
Empirically, this strategy drastically reduces the average spike rate from $66.0\%$ to $26.8\%$.

As shown in Table~\ref{tab:energy-comparison},  
With a model size that is only  53.1\% to 55.4\% of the baselines, SpikySpace reduced estimated energy consumption by 94.7\%-98.2\%, consuming only 0.17 mJ on the Electricity dataset.

\subsection{Ablation Study}
We will now study the effectiveness of our activation approximations and the model's sensitivity to timestep variations.

\begin{table}[h]
\centering
\caption{Ablation study results of PTSoftplus and PTSiLU on full-precision ANNs.}
\label{tab:ablation_compact}
\resizebox{\columnwidth}{!}{%
\begin{tabular}{l|c|ccccc}
\toprule
\multirow{2}{*}{\textbf{Model Variant}} & \multirow{2}{*}{\textbf{Metric}} & \multicolumn{5}{c}{\textbf{Prediction Horizon}} \\
 & & \textbf{3} & \textbf{6} & \textbf{12} & \textbf{24} & \textbf{Avg.} \\
\midrule

\multirow{2}{*}{SSM-softplus-SiLU} 
 & $R^2 \uparrow$   & \textbf{.895} & .846 & \textbf{.763} & .617 & \textbf{.780} \\
 & RRSE $\downarrow$ & \textbf{.342} & .414 & \textbf{.514} & .653 & \textbf{.481} \\
\midrule

\multirow{2}{*}{SSM-PTsoftplus-SiLU} 
 & $R^2 \uparrow$   & .892 & .845 & .760 & \textbf{.619} & .779 \\
 & RRSE $\downarrow$ & .347 & .416 & .517 & \textbf{.651} & .483 \\
\midrule

\multirow{2}{*}{SSM-softplus-PTSiLU} 
 & $R^2 \uparrow$   & \textbf{.895} & \textbf{.848} & .752 & .613 & .777 \\
 & RRSE $\downarrow$ & \textbf{.342} & \textbf{.412} & .525 & .656 & .484 \\
\midrule

\multirow{2}{*}{SSM-PTsoftmax-PTSiLU} 
 & $R^2 \uparrow$   & .893 & .847 & .755 & .614 & .777 \\
 & RRSE $\downarrow$ & .346 & .413 & .522 & .656 & .484 \\

\bottomrule
\end{tabular}%
}
\end{table}

\begin{table}[]
\centering
\caption{Ablation study on the impact of PTsoftplus and PTSiLU. `Bits' in this table denotes the activation quantization bit-width.}
\normalsize 
\renewcommand{\arraystretch}{0.75} 

\begin{tabular}{c|c|c|c|c}
\toprule
\textbf{Bits} & PTsoftplus  & PTSiLU  & $R^2$ & $\Delta R^2$ \\ 
\midrule
\multirow{4}{*}{\textbf{4}} 
& $\times$      & $\times$      & .607 & N/A   \\
& $\checkmark$  & $\times$      & .611 & +.004 \\
& $\times$      & $\checkmark$  & .611 & +.004 \\
& $\checkmark$  & $\checkmark$  & .610 & +.003 \\
\midrule
\multirow{4}{*}{\textbf{1}} 
& $\times$      & $\times$      & .612 & N/A   \\
& $\checkmark$  & $\times$      & .605 & -.007 \\
& $\times$      & $\checkmark$  & .616 & +.004 \\
& $\checkmark$  & $\checkmark$  & .603 & -.009 \\
\bottomrule
\end{tabular}
\label{abl2}
\end{table}

\textbf{Approximated Operations.}
Table~\ref{tab:ablation_compact} shows that replacing standard Softplus and SiLU with PTSoftplus and PTSiLU causes negligible performance drops across all horizons, confirming the effectiveness of our activation approximation.
We further quantify the impact under different activation bit-widths and demonstrate that our variants remain stable even under extremely low-bit settings, making them highly suitable for efficient hardware implementation. The results are shown in Table~\ref{abl2}.

\textbf{Timestep.}
We then analyze the influence of the simulation timestep $T$.
As shown in Figure~\ref{fig:ablation-timesteps}, the performance is stable as $T$ varies from 1 to 15.
The curves for both Solar and Metr-la datasets are nearly flat, indicating that the model is robust to different update granularities.
We adopt $T=3$ as the default setting to balance computational cost and stability at long prediction horizons.

\begin{figure}[t]
  \centering
  \subfloat[Solar - $R^2$ $\uparrow$]{%
    \includegraphics[width=0.49\linewidth]{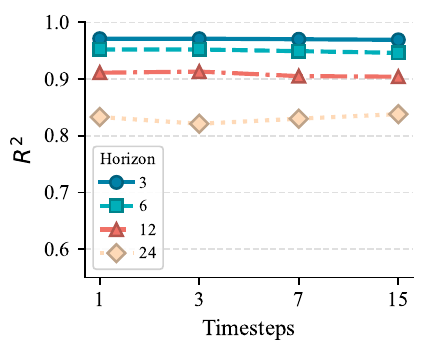}%
  }%
  \hfill
  \subfloat[Metr-la - $R^2$ $\uparrow$]{%
    \includegraphics[width=0.49\linewidth]{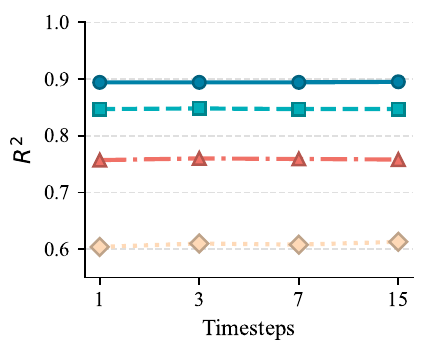}%
  }\\
  \subfloat[Solar - RRSE $\downarrow$]{%
    \includegraphics[width=0.49\linewidth]{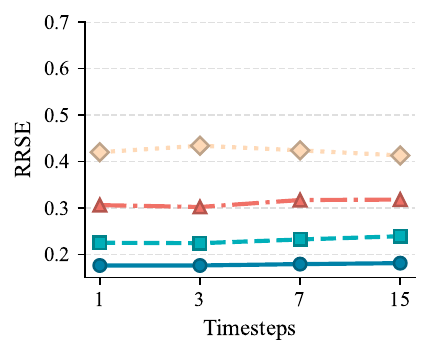}%
  }%
  \hfill
  \subfloat[Metr-la - RRSE $\downarrow$]{%
    \includegraphics[width=0.49\linewidth]{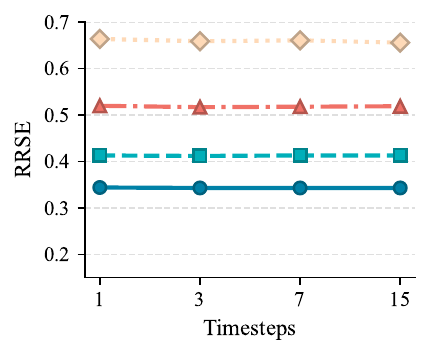}%
  }
  \caption{Ablation study results of different timestep parameters across horizons on Solar and Metr-la datasets.}
  \label{fig:ablation-timesteps} \vspace{-0.5cm}
\end{figure}

\section{Discussion}

\textbf{Neuromorphic Compatibility.}
To assess hardware feasibility, we implemented the core operators within the Lava framework targeting Intel’s Loihi. Going beyond standard theoretical estimates, we conducted a rigorous energy evaluation of our operators on a 22nm process. Unlike prior works that focus solely on computational costs, our analysis explicitly accounts for data movement overheads. By incorporating these memory access dynamics, we provide a fine-grained and more realistic energy profile, ensuring that our reported efficiency gains translate effectively to physical hardware implementations.
Future work will focus on deploying SpikySpace on physical platforms like Loihi~\cite{davies2018loihi} or BrainScaleS-2~\cite{pehle2022brainscales} to obtain empirical latency measurements and validate our simulation assumptions.

\textbf{Forecasting Horizon.}
We observe that SpikySpace performs best at shorter horizons but sees diminishing returns when the horizon is 24. We attribute this to the recursive structure of State-Space Models. Unlike Transformers that use global attention to access the full context, SpikySpace updates its latent state through linear transitions, which can allow errors to accumulate over longer sequences. Nevertheless, the model remains highly valuable for edge computing. For tasks where low latency is critical, such as traffic regulation and anomaly detection, the priority often lies in speed and power efficiency rather than maximizing precision on long horizons.

\section{Conclusion}

In this paper, we presented SpikySpace, a novel neuromorphic framework that adapts State-Space Models for efficient time-series forecasting. SpikySpace addresses the challenge of adapting continuous state-space dynamics for energy-efficient computation by replacing traditional energy-intensive floating-point multiplications with our novel PT-activations and Spiking Selective Scan.
Our experiments demonstrate that this design does not compromise accuracy. Notably, SpikySpace outperforms the dense iTransformer on the Electricity benchmark while reducing energy consumption by over 96\%. This validates that the Spiking Selective Scan mechanism can effectively capture complex temporal dynamics even under sparse, binary precision.

\clearpage
\section*{Impact Statement}
This paper presents work whose goal is to advance the field of machine
learning. There are many potential societal consequences of our work, none
of which we feel must be specifically highlighted here.


\bibliography{ref}
\bibliographystyle{icml2026}

\newpage
\appendix
\onecolumn
\section{Spike Generation}
\label{app:alg}
We provide the detailed spike generation method we adopted in algorithm \ref{algASG}, where $w_{ij}^l$ is the weight of layer $l$ from neuron $i$ to neuron $j$, $b_i^l$ is the bias of the neuron $i$ in layer $l$, $s_i^l$ is the input spike train of the neuron $i$, and $T$ is the time window size.

\begin{algorithm}
	\caption{Average IF model} 
	\label{algASG} 
	\begin{algorithmic}[1]
    
\STATE \textbf{Input:} Weight $w_{ij}^l$ of layer $l$ from neuron $i$ to neuron $j$, bias $b_i^l$, input spike train $s_i^l$, threshold $\theta$, membrane potential $U_i^l(t)$ at timestep $t$, time window size $T$;
\STATE \textbf{Output:} Spike train $s_j$;

\STATE $U^l_i(t) \gets \sum_{j} (w_{ij}^l \cdot s^l_i(t) + b_i^l)$
\STATE $A_i^l \gets \sum_{t=1}^{T}U^l_i(t)/T $

\STATE $s^l_i(0) \gets 0$, $V^l_i(0) \gets 0$;
\FOR {$t \gets 1$ to $T$}

\STATE $V^l_i(t) \gets V^l_i(t-1) + A_i^l$
\STATE $
s^{l+1}_j(t) \gets \begin{cases}
1, & V^l_i(t) \geq \theta\\
0, & \text{otherwise}
\end{cases};
$
\STATE $V^l_i(t) \gets \begin{cases}
V^l_i(t) - \theta, & V^l_i(t) \geq \theta\\
V^l_i(t), & \text{otherwise}
\end{cases}.$ 
\ENDFOR 
\STATE \textbf{return} $s^{l+1}_j$ \COMMENT{Output the ASG spike train of neuron $i$}

\end{algorithmic}
\end{algorithm}

\section{State Space Model}
\label{app:ssm}
A continuous-time linear SSM is given by
\begin{align}
\frac{dx(t)}{dt} &= A\,x(t) + B\,u(t), \\
y(t) &= C\,x(t) + D\,u(t),
\end{align}

where $x(t) \in \mathbb{R}^N$ is the hidden state, $u(t) \in \mathbb{R}^m$ is the input, $y(t) \in \mathbb{R}^p$ is the output, and $A,B,C,D$ are system matrices defining state dynamics and input/output mappings. Discretizing with time step $\Delta t$ yields
\begin{align}
x_{n+1} &= A_d\,x_n + B_d\,u_n, \\
y_n &= C\,x_n + D\,u_n,
\end{align}

where $A_d$ and $B_d$ are the discrete-time equivalents of $A$ and $B$, and $n$ indexes the time step. 
This formulation allows the system to capture temporal dependencies over long sequences: the output $y_n$ can be expressed as a convolution of the input sequence $u$ with an effective kernel
\begin{equation}
y_n = \sum_{k=0}^{n} K_k\, u_{\,n-k}, \quad \text{with } K_k = C\,A_d^k\,B_d.
\end{equation}

\section{Proofs and Analyses}
\subsection{Proof of Lemma~\ref{lem:ptsoftplus_continuous}}\label{sec:ptsoftplus_continuous}

\setcounter{theorem}{0}
\renewcommand{\thetheorem}{4.\arabic{theorem}}

\begin{lem}
    {\rm PTSoftplus} is continuously differentiable.
\end{lem}

\begin{proof}
    Both $2^x$ and $x + C$ are infinitely differentiable on their own. The only point we need to check is $x = x_c$.

    We first show that the function value is continuous at $x = x_c$.
    The left-hand limit of PTSoftplus at $x = x_c$ is
    \[
        \lim_{x \rightarrow x_c^{-}} {\rm PTSoftplus}(x) = \lim_{x \rightarrow x_c^{-}} 2^x = 2^{x_c} = {1\over \ln 2},
    \]
    and the right-hand limit at $x = x_c$ is 
    \[
        \lim_{x \rightarrow x_c^{+}} {\rm PTSoftplus}(x) = \lim_{x \rightarrow x_c^{+}} x + C = x_c + C = {1 \over \ln 2}.
    \]
    The left-hand limit equals the right-hand limit, so the PTSoftplus function is continuous on $\mathbb{R}$.

    Next, we show that the first-order derivative of PTSoftplus is also continuous at $x = x_c$. The derivative of PTSoftplus is as follows:
    \[
        {\rm PTSoftplus} ' (x) = \begin{cases}
            (\ln 2)\cdot 2^x, & \text{if } x < x_c, \\
            1, &\text{otherwise}.
        \end{cases}
    \]
    Since the derivative is a constant when $x \ge x_c$, we only need to check if the left-hand limit of ${\rm PTSoftplus}'$ at $x = x_c$ equals~1. The left-hand limit is given by
    \[
        \lim_{x \rightarrow x_c^{-}} {\rm PTSoftplus}'(x) = \lim_{x \rightarrow x_c^{-}} (\ln 2) \cdot 2^x = (\ln 2) \cdot 2^{x_c}.
    \]
    Substitute $x_c = \log_2({1\over \ln 2})$ into the above. Then the value becomes $(\ln 2) \cdot {1 \over \ln 2} = 1$, which completes the proof. 
\end{proof}

\subsection{Proof of Lemma~\ref{lem:softplus_closeness}}\label{sec:softplus_closeness}
\begin{lem}
    The maximum deviation between the {\rm PTSoftplus} and the {\rm Softplus} function is bounded by 0.914, i.e.
    \[
        \|{\rm PTSoftplus} - {\rm Softplus}\|_\infty  \le 0.914.
    \]
    Moreover, the maximum deviation between the derivatives of {\rm PTSoftplus} and {\rm Softplus} is bounded by 0.371, i.e. 
    \[
        \|{\rm PTSoftplus}' - {\rm Softplus}'\|_\infty \le 0.371.
    \]
\end{lem}

\begin{proof}  
    Define $f(x) = {\rm PTSoftplus}(x) - {\rm Softplus}(x)$. To find an upper bound on the maximum deviation between PTSoftplus and Softplus, it suffices to determine the minimum and maximum values of the function $f$:
    \[
        f(x) = \begin{cases}
            2^x - \ln (1 + e^x), & \text{if } x < x_c, \\
            x - \ln (1 + e^x) + {1\over \ln 2} - \log_2 ({1\over \ln 2}), &\text{otherwise}.
        \end{cases}
    \]
    As $x \rightarrow -\infty$, the value of $f(x)$ approaches 0, and as $x \rightarrow \infty$, the value of $f(x)$ approaches the constant value ${1\over \ln 2} - \log_2 ({1\over \ln 2})$. 
    From Lemma~\ref{lem:ptsoftplus_continuous}, we know that ${\rm PTSoftplus}'$ is continuous, and since ${\rm Softplus}'$ is also continuous, we know that $f$ is continuously differentiable.
    The derivative of $f$ is 
    \[
        f'(x) = \begin{cases}
            (\ln 2) \cdot 2^x - {e^x \over 1 + e^x}, &\text{if } x < x_c, \\
            {1 \over 1 + e^x}, &\text{otherwise}.
        \end{cases}
    \]
    It is clear that, for $x \ge x_c$, the derivative $f'(x)$ is strictly positive, so $f(x)$ is monotonically increasing when $x \ge x_c$. Next, we show that $(\ln 2) \cdot 2^x > {e^x \over 1 + e^x}$ when $x < x_c$, and consequently $f(x)$ is monotonic on $\mathbb{R}$. By the fact that $2^x = e^{x \ln 2}$, the inequality can be reformulated as
    \begin{equation}\label{eq:ptsoftplus_firstorder_ineq}
        (\ln 2) \cdot e^{x \ln 2} > {e^x \over 1 + e^x}.
    \end{equation}
    Divide both sides by $e^{x \ln 2}$. Then the above becomes
    \[
        \ln 2 > {e^{x (1 - \ln 2)} \over 1 + e^x}.
    \]
    Define $g(x)$ as the function on the right-hand side, where
    \[
        g'(x) = {e^{x(1 - \ln 2)}\cdot (1 - \ln 2 - (\ln 2)\cdot e^x) \over (1 + e^x)^2}.
    \]
    Since $(1 + e^x)^2$ and $e^{x(1-\ln 2)}$ are always positive, setting the first derivative $g'(x)$ to 0 gives
    \[
        1 - \ln 2 - (\ln 2) \cdot e^x = 0
        \quad \Rightarrow \quad 
        x = \ln \Big({1 - \ln 2 \over \ln 2}\Big).
    \]
    Denote the critical point $x_g = \ln ({1 - \ln 2 \over \ln 2})$. It can be easily checked that $g'(x) > 0$ when $x < x_g$, and $g'(x) < 0$ when $x > x_g$. Therefore, $g$ is a concave function whose maximum value is
    \[
        g(x_g) = {e^{x_g(1 - \ln 2)} \over 1 + e^{x_g}} \le 0.540 < \ln 2 \ (\approx 0.693),
    \]
    which validates the inequality \eqref{eq:ptsoftplus_firstorder_ineq}. Consequently, $f'(x)$ is strictly positive on $\mathbb{R}$, and $f$ is a monotonic function. Since $f(x) \rightarrow 0$ as $x \rightarrow -\infty$ and $f(x) \rightarrow {1 \over \ln 2} - \log_2({1\over \ln 2})$ as $x \rightarrow \infty$, we conclude that
    \[
        \|f\|_\infty = \sup_{x \in \mathbb{R}} |f(x)| \le {1 \over \ln 2} - \log_2\big({1\over \ln 2}\big) \le 0.914.
    \]
    
    Next, we show that the difference between the derivatives of PTSoftplus and Softplus is also bounded -- that is, the supremum norm of $f'$ is bounded. From the above analyses, we already know that $f'(x)$ is always positive on $\mathbb{R}$. It remains to find an upper bound on the value of $f'(x)$.

    It is clear that for $x \ge x_c$, $f'(x)$ is strictly decreasing. Therefore, on $[x_c, \infty)$, the maximum value of $f'(x)$ is
    \[
        f'(x_c) = {1 \over 1 + e^{x_c}} \le 0.371.
    \]
    For $x < x_c$, the second-order derivative of $f$ is 
    \[
        f''(x) = (\ln 2)^2 \cdot 2^x - {e^x \over (1 + e^x)^2}.
    \]
    In what follows, we show that the above $f''(x) > 0$ on $(-\infty, x_c)$, thus $f'(x)$ is strictly increasing. In other words, we want to show
    \begin{equation}\label{eq:ptsoftplus_secondorder_ineq}
        (\ln 2)^2 > {e^{x(1 - \ln 2)} \over (1 + e^x)^2}.
    \end{equation}
    Define $h(x)$ to be the function on the right-hand side, whose derivative is
    \[
        h'(x) = \frac{e^{x(1-\ln 2)} \cdot (1 + e^x) \cdot [ (1-\ln 2) (1 + e^x) - 2 e^x ] }{ (1 + e^x)^4 }.
    \]
    Since $e^{x(1-\ln 2)} \cdot (1 + e^x)$ is strictly positive, setting $h'(x)$ to zero gives 
    \[
        (1-\ln 2) (1 + e^x) - 2 e^x = 0 \quad \Rightarrow \quad x = \ln \Big( {1 - \ln 2 \over 1 + \ln 2} \Big).
    \]
    Denote the critical point $x_h = \ln ({1 - \ln 2 \over 1 + \ln 2})$. It can be easily checked that $h'(x) > 0$ when $x < x_h$, and $h'(x) < 0$ when $x > x_h$. Therefore, $h$ is a concave function whose maximum value is 
    \[
        h(x_h) = {e^{x_h (1 - \ln 2)} \over (1 + e^{x_h} )^2} \approx 0.424 \le (\ln 2)^2 \ (\approx 0.480),
    \]
    which validates the inequality \eqref{eq:ptsoftplus_secondorder_ineq}. Consequently, $f''(x)$ is strictly positive and $f'(x)$ is increasing on $(-\infty, x_c)$.
    By the fact that $f'(x)$ is continuous, we conclude that
    \[
        \|f'\|_\infty = \sup_{x\in \mathbb{R}} |f'(x)| = f'(x_c) \le 0.371,
    \]
    which completes the proof.
\end{proof}

\subsection{Proof of Lemma~\ref{lem:ptsilu_continuity}}\label{sec:ptsilu_continuity}

\begin{lem}
    {\rm PTSiLU} is continuously differentiable. 
\end{lem}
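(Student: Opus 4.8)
The plan is to follow the same template as the proof of Lemma~\ref{lem:ptsoftplus_continuous}. Both constituent pieces, $-2^x$ on $(-\infty,\bar{x}_c)$ and $2^{-x-1}+x+\bar{C}$ on $[\bar{x}_c,\infty)$, are compositions of exponentials and affine maps, hence infinitely differentiable on their respective open domains; so the only place where differentiability can fail is the junction $x=\bar{x}_c$. It therefore suffices to verify two one-sided conditions there: first that the left and right function values agree (continuity of $\mathrm{PTSiLU}$), and second that the left and right first derivatives agree (continuity of $\mathrm{PTSiLU}'$).

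To streamline the algebra I would introduce the shorthand $s=\sqrt{1+2(\ln 2)^2}$, so that by definition $2^{\bar{x}_c}=\tfrac{s-1}{2\ln 2}$ and $\bar{C}=-\tfrac{s}{\ln 2}-\bar{x}_c$. The key technical step -- and the one I expect to be the main obstacle -- is simplifying the term $2^{-\bar{x}_c-1}$ that appears in the right piece and its derivative. Writing $2^{-\bar{x}_c-1}=\tfrac{1}{2\cdot 2^{\bar{x}_c}}=\tfrac{\ln 2}{s-1}$ and rationalizing with the identity $s^2-1=2(\ln 2)^2$ collapses it to the clean form $\tfrac{s+1}{2\ln 2}$. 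With this in hand, the right-hand value at $\bar{x}_c$ becomes $\tfrac{s+1}{2\ln 2}+\bar{x}_c+\bar{C}$, and substituting $\bar{C}$ cancels the $\bar{x}_c$ terms and leaves $-\tfrac{s-1}{2\ln 2}$, which is exactly the left-hand value $-2^{\bar{x}_c}$. This establishes continuity of the function.

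For the derivative, I would compute $\tfrac{d}{dx}(-2^x)=-(\ln 2)\,2^x$ and $\tfrac{d}{dx}(2^{-x-1}+x+\bar{C})=-(\ln 2)\,2^{-x-1}+1$, then evaluate both at $\bar{x}_c$ using the two simplified exponentials above. The left derivative gives $-(\ln 2)\tfrac{s-1}{2\ln 2}=-\tfrac{s-1}{2}$, and the right derivative gives $-(\ln 2)\tfrac{s+1}{2\ln 2}+1=-\tfrac{s+1}{2}+1=-\tfrac{s-1}{2}$; the two coincide, so $\mathrm{PTSiLU}'$ is continuous at $\bar{x}_c$, completing the proof.

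Finally, a conceptual remark guides the computation: the constants $\bar{x}_c$ and $\bar{C}$ are not arbitrary but are forced by the two $C^1$-matching conditions. Requiring equality of the one-sided derivatives yields the quadratic $2(\ln 2)(2^{\bar{x}_c})^2+2\cdot 2^{\bar{x}_c}-\ln 2=0$ in the unknown $2^{\bar{x}_c}$, whose positive root is precisely $\tfrac{s-1}{2\ln 2}$ -- this is where the square root $s$ originates. The remaining value-matching condition then fixes $\bar{C}$ uniquely. Recognizing this structure in advance makes the otherwise opaque algebra transparent and reduces the risk of sign or rationalization slips, which are the only real pitfalls in the argument.
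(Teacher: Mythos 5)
Your proposal is correct and follows essentially the same route as the paper's proof: both arguments reduce the claim to checking value-matching and derivative-matching at the single junction point $x=\bar{x}_c$, using the identity $2^{-\bar{x}_c-1}=\tfrac{\sqrt{1+2(\ln 2)^2}+1}{2\ln 2}$ (which your shorthand $s$ and rationalization via $s^2-1=2(\ln 2)^2$ make explicit). Your closing remark on how the quadratic in $2^{\bar{x}_c}$ forces the constants is a nice addition but not part of, nor needed for, the paper's argument.
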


\begin{proof}
    The power functions and $x$ are all continuously differentiable. So, we only need to check the point~$x = \bar{x}_c$. 

    The left-hand limit of PTSiLU at $x = \bar{x}_c$ is
    \[
        \lim_{x \rightarrow \bar{x}_c^{-}} {\rm PTSiLU}(x) = - 2^{\bar{x}_c} = {1 - \sqrt{1 + 2 \cdot (\ln 2)^2} \over 2 \ln 2},
    \]
    and the right-hand limit of PTSiLU at $x = \bar{x}_c$ is 
    \[
    \begin{aligned}
        \lim_{x \rightarrow \bar{x}_c^{+}}& {\rm PTSiLU}(x) = 2^{-\bar{x}_c - 1} + \bar{x}_c + \bar{C} \\
        &= {\sqrt{1 + 2 \cdot (\ln 2)^2 } + 1 \over 2 \ln 2 } -\frac{\sqrt{1 + 2 \cdot (\ln 2)^2}}{\ln 2} \\
        &= {1 - \sqrt{1 + 2 \cdot (\ln 2)^2} \over 2 \ln 2}.
    \end{aligned}
    \]
    Since the left-hand limit equals the right-hand limit, the function PTSiLU is continuous.

    Next, we show that the derivative of PTSiLU is also continuous. The derivative of PTSiLU is given by
    \[
        {\rm PTSiLU}'(x) = \begin{cases}
            -(\ln 2)\cdot 2^{x}, & \text{if } x < \bar{x}_c, \\
            1 - (\ln 2)\cdot 2^{-x -1}, & \text{otherwise}.
        \end{cases}
    \]
    It is easily seen that the derivative is continuous on $(-\infty, \bar{x}_c)$ and $[x_c, \infty)$. Therefore, we only need to check the continuity of ${\rm PTSiLU}'$ at the transition point $x = \bar{x}_c$.
    The left-hand limit of ${\rm PTSiLU}'(x)$ at $x = \bar{x}_c$ is
    \[
        \lim_{x \rightarrow \bar{x}_c^{-}} {\rm PTSiLU}'(x) = - (\ln 2)\cdot 2^{\bar{x}_c} = {1 - \sqrt{1 + 2 \cdot (\ln 2)^2} \over 2 },
    \]
    and the right-hand limit of ${\rm PTSiLU}'(x)$ at $x = \bar{x}_c$ is
    \[
    \begin{aligned}
        \lim_{x \rightarrow \bar{x}_c^{+}}& {\rm PTSiLU}'(x) = 1 - (\ln 2)\cdot 2^{-\bar{x}_c - 1} \\
        &= 1 - {\sqrt{1 + 2 \cdot (\ln 2)^2 } + 1 \over 2 } \\
        &= {1 - \sqrt{1 + 2 \cdot (\ln 2)^2} \over 2 }.
    \end{aligned}
    \]
    The left-hand limit equals the right-hand limit, so the derivative ${\rm PTSiLU}'$ is also continuous on $\mathbb{R}$. 
\end{proof}

\subsection{Proof of Lemma~\ref{lem:ptsilu_closeness}}\label{sec:ptsilu_closeness}

\begin{lem}
    The maximum deviation between the {\rm PTSiLU} and the {\rm SiLU} function is bounded by 0.316, i.e.
    \[
        \|{\rm PTSiLU} - {\rm SiLU} \|_\infty \le 0.316.
    \]
    Moreover, the maximum deviation between the derivatives of {\rm PTSiLU} and {\rm SiLU} is bounded by $0.263$, i.e.
    \[
        \|{\rm PTSiLU}' - {\rm SiLU}' \|_\infty \le 0.263.
    \]
\end{lem}

\begin{proof}
    Define $\bar{f}(x) = {\rm PTSiLU}(x) - {\rm SiLU}(x)$. Then the analysis of the deviation between PTSiLU and SiLU reduces to the analysis of $\bar f$. We start from the derivatives of $\bar f$.
    The first derivative of $\bar f$ is
    \[
        \bar{f}'(x) = \begin{cases}
             - (\ln2) \cdot 2^x -{e^x\cdot (1 + x + e^x) \over (1 + e^x)^2}, &\text{if } x < \bar{x}_c, \\
            1 - (\ln 2)\cdot 2^{-x-1} - {e^x\cdot (1 + x + e^x) \over (1 + e^x)^2}, &\text{otherwise}.
        \end{cases}
    \]
    On $(-\infty, \bar{x}_c)$, the second derivative of $\bar f$ is 
    \[
        \bar{f}''(x) = - (\ln 2)^2 \cdot 2^x - {e^x (2 + x + e^x (2 - x)) \over (1 + e^x)^3},
    \]
    and on $[\bar{x}_c, \infty)$, the second derivative of $\bar f$ is 
    \[
        \bar{f}''(x) = (\ln 2)^2 \cdot 2^{-x-1} - {e^x (2 + x + e^x (2 - x)) \over (1 + e^x)^3}.
    \]

    Denote $\bar{h}''(x)$ as the function ${e^x (2 + x + e^x (2 - x)) \over (1 + e^x)^3}$, which can be simplified to
    \begin{equation}\label{eq:ptsilu_hx_bound}
    \begin{aligned}
        \bar{h}''(x) = \underbrace{\frac{2e^x}{(1+e^x)^2}}_{A(x)} + \underbrace{\frac{x(e^x - e^{2x})}{(1+e^x)^3}}_{B(x)}.
    \end{aligned}
    \end{equation}
    Here, the first term $A(x)$ is always positive. 
    For the second term $B(x)$, since $(1 + e^x)^3 > 0$, its sign is determined by the numerator $x(e^x - e^{2x})$. 
    When $x < 0$, we have $(e^x - e^{2x}) > 0$, so $B(x) < 0$; when $x > 0$, we have $(e^x - e^{2x}) < 0$, so $B(x)$ is also negative. Therefore, $B(x)$ has a maximum value of 0 at $x = 0$. 
    Since $A(x) > 0$ and $B(x) \le 0$, to bound the values of $\bar{h}''(x)$, we can find extrema of $A(x)$ and $B(x)$ separately. 

    The derivative of $A'(x)$ is 
    \[
        A'(x) = \frac{2(e^x - e^{2x})}{(1+e^x)^3}.
    \]
    Setting it to zero gives $x = 0$. Moreover, $A'(x) > 0$ when $x < 0$, and $A'(x) < 0$ when $x > 0$. Therefore, $A(x)$ attains its maximum value at $x = 0$, where $A(0) = 0.5$. For the second term $B(x)$, it can be checked that $B(x)$ is symmetric (i.e. $B(x) = B(-x)$). Moreover, by the fact that $0 < {1 - e^x  \over (1 + e^x)^2} < 1$ for $x < 0$, we have 
    \[
        B(x) = {x e^x (1 - e^x) \over (1 + e^x)^3} \ge xe^x \quad \text{for } x < 0.
    \]
    The function $-xe^x$ attains its minimum value of $-{1\over e}$ at $x = -1$. Therefore, we have $B(x) \ge -{1\over e} \ge -0.368$. Together with \eqref{eq:ptsilu_hx_bound}, we have $\|\bar{h}''\|_\infty \le \max\{ \sup |A(x)|, \sup |B(x)| \} = 0.5$. Therefore, for $\bar{f}''$, since $-(\ln 2)^2 \cdot 2^x$ is strictly decreasing on $(-\infty, \bar{x}_c)$, we have 
    \[
        \sup_{x \in (-\infty, \bar{x}_c) } |\bar{f}''(x)| \le (\ln 2)^2 \cdot 2^{\bar{x}_c} + \|\bar{h}''\|_\infty \le 0.639.
    \]
    On the other hand, since $(\ln 2)^2 \cdot 2^{-x-1}$ is strictly decreasing on $[\bar{x}_c , \infty)$, we have
    \[
        \sup_{x \in [\bar{x}_c, \infty) } |\bar{f}''(x)| \le (\ln 2)^2 \cdot 2^{-\bar{x}_c-1} + \|\bar{h}''\|_\infty \le 1.332.
    \]
    Combining the above two yields $\|\bar{f}''\|_\infty \le 1.332$. In other words, the rate of change of $\bar{f}'(x)$ is bounded by 1.332. 

    Based on the above bound on $\|\bar{f}''\|_\infty$, we employ a computer aided analysis for $\bar{f}'$ and $\bar{f}$. We generate a sequence of points ${\cal X} = \{x_1, x_2, \dots, x_N\}$ from -10 to 10, where $x_{i+1} - x_i = 0.001$ for $i = 1, \dots, N-1$. Then we compute the value of $\bar{f}'(x)$ for each $x \in {\cal X}$. The numerical results show that these values are between -0.178 and 0.261. By the fact that $\|\bar{f}''\|\le 1.332$, we have
    \[  
        \sup_{x \in [-10, 10]} |\bar{f}'(x)| \le 0.261 + 1.332 \cdot 0.001 \le 0.263.
    \]
    It can be shown that $\bar{f}'(x)$ diminishes when $x \notin [-10, 10]$.

    Next, we evaluate the value of $\bar{f}(x)$ for each $x \in {\cal X}$. The numerical results show that these values are between -0.229 and 0.315. By the fact that $\|\bar{f}'\|\le 0.263$, we have
    \[
        \sup_{x \in [-10, 10]} |\bar{f}(x)| \le 0.315 + 0.263 \cdot 0.001 \le 0.316.
    \]
    It is easily seen that $\bar{f}(x)$ is negligible when $x < -10$. For $x > 10$, the function value $\bar{f}(x)$ eventually decreases to the limit $\lim_{x \rightarrow \infty} \bar{f}(x) \ge - 0.229$.
\end{proof}

\section{Detailed Experimental Settings}
\label{app:exp_details}

\subsection{Datasets}
\label{app:dataset}
In this section, we provide a detailed description of the dataset used in our experiments. All datasets are divided chronologically into training, validation, and test sets without shuffling. 
The statistical details and split ratios are summarized in Table~\ref{tab:dataset_stats}.

\begin{table}[h]
\centering
\caption{Statistics of the four benchmarks.}
\label{tab:dataset_stats}
\setlength{\tabcolsep}{5pt}
\renewcommand{\arraystretch}{1.1}
\begin{tabular}{lcccc}
\toprule
\textbf{Dataset} & \textbf{\# Samples} & \textbf{\# Variables} & \textbf{Sample Rate} & \textbf{Length}\\
\midrule
Metr-la     & 34,272  & 207 & 5 minutes  & 12   \\
Pems-bay    & 52,116  & 325 & 5 minutes  & 12   \\
Solar       & 52,560  & 137 & 10 minutes & 168  \\
Electricity & 26,304  & 321 & 1 hour     & 168  \\
\bottomrule
\end{tabular}%
\end{table}

\begin{itemize}
    \item \textbf{METR-LA} \cite{li2018diffusion}: 
    Traffic speed data collected from 207 loop detectors on Los Angeles highways, aggregated every 5 minutes.
    The dataset contains 34,272 samples, each with a sequence length of 12 and 207 variables.
    It is split into training, validation, and test sets with a ratio of $(0.7,\,0.2,\,0.1)$.
    This dataset captures strong spatial–temporal dependencies due to road network topology.

    \item \textbf{PEMS-BAY} \cite{li2018diffusion}:
    Another large-scale traffic dataset from the California Bay Area, sampled every 5 minutes from 325 sensors.
    It includes 52,116 samples, each of length 12 with 325 variables.
    The chronological split ratio is $(0.7,\,0.2,\,0.1)$.
    PEMS-BAY is widely adopted for benchmarking spatiotemporal forecasting models.

    \item \textbf{Solar-Energy} \cite{lai2018modeling}:
    Solar power production recorded from 137 photovoltaic plants across several years, exhibiting strong daily and seasonal periodicity.
    The dataset contains 52,560 hourly samples with 137 variables and a sequence length of 168.
    It is partitioned into training, validation, and test sets with a ratio of $(0.6,\,0.2,\,0.2)$.

    \item \textbf{Electricity} \cite{lai2018modeling}:
    Hourly electricity consumption data from 321 clients over multiple years, showing clear diurnal and weekly patterns.
    The dataset includes 26,304 samples, each with 321 variables and a sequence length of 168.
    It is divided into train/validation/test sets in the ratio $(0.6,\,0.2,\,0.2)$.
\end{itemize}

\subsection{Baselines}
We compare SpikySpace with the following methods:
\begin{itemize}
    \item GRU \cite{cho2014learning}: A recurrent neural network (RNN) model that captures temporal dependencies through gated recurrent units. It serves as a classical ANN baseline for sequence modeling.

    \item iTransformer \cite{liuitransformer}: A state-of-the-art transformer variant that learns both instance-wise and channel-wise dependencies in multivariate time series, representing strong ANN performance under full-precision computation.

    \item SpikeTCN \cite{lv2024efficient}: A convolutional SNN that integrates temporal convolution with spike-based processing, highlighting the benefit of temporal feature extraction in event-driven computation.

    \item SpikeRNN \cite{lv2024efficient}: A recurrent SNN model that replaces analog activations with spiking neurons, enabling energy-efficient sequential modeling through temporal dynamics.

    \item iSpikformer \cite{lv2024efficient}: A spiking adaptation of the transformer architecture that replaces softmax attention and linear layers with spike-compatible modules, achieving competitive accuracy with reduced energy.

    \item SpikeSTAG \cite{hu2025spikestag}: A hybrid spatio-temporal graph SNN designed for structured time-series data, combining graph connectivity with event-driven updates to model spatial correlations efficiently.
    
\end{itemize}

\subsection{Evaluation Metrics}
For evaluation, we reported the coefficient of determination ($R^2$) and Root Relative Squared Error (RRSE) for each dataset across all the horizons. Let $y_t$ denote the ground-truth value at time step $t$, $\hat{y}t$ denote the corresponding model prediction, and $\bar{y}$ denote the mean of all ground-truth values. The two metrics are defined as:
\begin{equation}
    R^2 = 1 - \frac{\sum_{t} (y_t - \hat{y}_t)^2}{\sum_{t} (y_t - \bar{y})^2},
    \label{eq:r2}
\end{equation}
\begin{equation}
    \mathrm{RRSE} = 
    \sqrt{
        \frac{\sum_{t} (y_t - \hat{y}_t)^2}
             {\sum_{t} (y_t - \bar{y})^2}
    },
    \label{eq:rrse}
\end{equation}
A higher $R^2$ indicates stronger goodness of fit, while a lower RRSE reflects smaller relative error and better cross-dataset comparability.
We follow prior work in adopting these two metrics rather than MAE or MSE, since they are scale-invariant and thus comparable across datasets with different magnitudes, and they better highlight both relative deviation and variance-explained quality.

\subsection{Implementation Details}
In all experiments, we set the batch size to 64 and the learning rate to $5\times10^{-4}$. 
For each dataset, the forecasting horizons are set to 3, 6, 12, and 24 time steps. 
Our model is trained using the Adam optimizer with the mean squared error (MSE) as the loss function. 
During training, we set the maximum epoch number as 1000 while the early stop patience as 20.
All experiments are conducted on a single NVIDIA A100 GPU with 80GB of memory. The SpikySpace model results we reported in Table~\ref{tab:forecast_comparison} are using a timestep $T=3$, which means we use a spike train length of 3 to represent a scalar.

\subsection{Quantization Details}
\label{app:taq}
The straight-through estimator (STE) is employed as commonly used in previous works to back-propagate the gradients to $\alpha$. 

Our quantization method covers both symmetric and asymmetric cases. 
Specifically, the real-valued tensor $X_R$ is first normalized by subtracting the offset $\beta$ and dividing by $\alpha$. 
The result is rounded to the nearest integer and clipped to the representable integer range $[Q_n, Q_p]$, 
which is determined by the bit width $b$. 
Finally, the quantized value is mapped back to the real domain as $X_Q$. 
When $\beta = 0$ and $(Q_n, Q_p) = (-2^{b-1},\, 2^{b-1}-1)$, 
the formulation reduces to symmetric quantization.
When $\beta \neq 0$ and $(Q_n, Q_p) = (0,\, 2^b-1)$, 
It becomes asymmetric quantization.


\end{document}